\def\blfootnote{\gdef\@thefnmark{}\@footnotetext}
\theoremstyle{plain}
\newtheorem{theorem}{Theorem}[section]
\newtheorem{proposition}[theorem]{Proposition}
\newtheorem{lemma}[theorem]{Lemma}
\theoremstyle{definition}
\newtheorem{definition}[theorem]{Definition}
\newtheorem{assumption}[theorem]{Assumption}
\theoremstyle{remark}
\newtheorem{remark}[theorem]{Remark}
\newcommand{\setc}{{\mathsf{c}}}
\newcommand{\supp}{\mathrm{supp}}
\newcommand{\cut}{\mathrm{cut}}
\newcommand{\vol}{\mathrm{vol}}
\title{Local Graph Clustering with Noisy Labels}
\author{
    Artur Back de Luca$^{*}$\qquad\qquad
    Kimon Fountoulakis$^{*}$\qquad\qquad
    Shenghao Yang$^{*}$
}
\date{}
\begin{document}
\maketitle
\def\thefootnote{*}
\footnotetext{David R. Cheriton School of Computer Science, University of Waterloo, Waterloo, Ontario, Canada.}
\def\thefootnote{\arabic{footnote}}
\blfootnote{Emails: \href{mailto:abackdel@uwaterloo.ca}{abackdel@uwaterloo.ca}, \href{mailto:kimon.fountoulakis@uwaterloo.ca}{kimon.fountoulakis@uwaterloo.ca}, \href{mailto:shenghao.yang@uwaterloo.ca}{shenghao.yang@uwaterloo.ca}}

\begin{abstract}
The growing interest in machine learning problems over graphs with additional node information such as texts, images, or labels has popularized methods that require the costly operation of processing the entire graph. Yet, little effort has been made to the development of fast local methods (i.e. without accessing the entire graph) that extract useful information from such data. To that end, we propose a study of local graph clustering using noisy node labels as a proxy for additional node information. In this setting, nodes receive initial binary labels based on cluster affiliation: 1 if they belong to the target cluster and 0 otherwise. Subsequently, a fraction of these labels is flipped. We investigate the benefits of incorporating noisy labels for local graph clustering. By constructing a weighted graph with such labels, we study the performance of graph diffusion-based local clustering method on both the original and the weighted graphs. From a theoretical perspective, we consider recovering an unknown target cluster with a single seed node in a random graph with independent noisy node labels. We provide sufficient conditions on the label noise under which, with high probability, using diffusion in the weighted graph yields a more accurate recovery of the target cluster. This approach proves more effective than using the given labels alone or using diffusion in the label-free original graph. Empirically, we show that reliable node labels can be obtained with just a few samples from an attributed graph. Moreover, utilizing these labels via diffusion in the weighted graph leads to significantly better local clustering performance across several real-world datasets, improving F1 scores by up to 13\%.
\end{abstract}

\section{Introduction}

Given a graph and a set of seed nodes from the graph, the task of local graph clustering aims to identify a small cluster of nodes that contains all or most of the seed nodes, without exploring the entire graph \cite{SH13,OZ2014}. Because of their ability to extract local structural properties within a graph and scalability to work with massive graphs, local graph clustering methods are frequently used in applications such as community detection, node ranking, and node embedding \cite{weng2010twitterrank,mahoney2012local,kloumann2014community,perozzi2014deepwalk,Gleich15,MS2021,choromanski2023taming,FMGW23}.

Traditionally, the problem of local graph clustering is studied under a simple, homogeneous context where the only available source of information is the connectivity of nodes, i.e. edges of the graph. There is often little hope to accurately identify a well-connected ground-truth target cluster which also has many external connections. Meanwhile, the emergence of heterogeneous data sources which consist of a graph and any additional node information like texts, images, or ground-truth labels offers new possibilities for improving existing clustering methods. This additional information can significantly benefit clustering, especially when the graph structure does not manifest a tightly-knit cluster of nodes. Yet, little effort has been made to formally investigate the benefits of combining multiple data sources for local graph clustering. Only recently, the work of \cite{YF2023WFD} has studied the usage of node attributes under a strong homophily assumption. However, they require separable node attributes—often impractical for real-world data—and neglect other forms of node information that might be available. For example, in many cases, we also have access to the ground-truth labels of a small set of nodes which reveal their cluster affiliation, such as when it is known that certain nodes do not belong to the target cluster. While ground-truth label information has been extensively used in (semi-)supervised learning contexts and proved vital in numerous applications~\cite{KW2017,hamilton2017inductive}, it is neither exploited by existing methods for local graph clustering nor analyzed in proper theoretical settings with regard to how and why they can become helpful. 

A challenge in designing local graph clustering algorithms that exploit additional sources of information and analyze their performance lies in the fact that additional data sources come in different forms. 
These can take the form of node or edge attributes, and even additional ground-truth labels for some nodes. Alternatively, one may just have access to an oracle that outputs the likelihood of a node belonging to the target cluster, based on all available information. Due to the variability of attributed graph datasets in practice, a local graph clustering method and its analysis should ideally be agnostic to the specific source or form of additional information. To that end, in order to investigate the potential benefits that various additional sources of information can potentially bring to a local graph clustering task, we propose a study in the following setting.

{\bf\em Local Graph Clustering with Noisy Node Labels:} {\em Given a graph and a set of seed nodes, the goal is to recover an unknown target cluster around the seed nodes. Suppose that we additionally have access to noisy node labels (not to be confused with ground-truth labels), which are initially set to 1 if a node belongs to the target cluster and 0 otherwise, and then a fraction of them is flipped. How and when can these labels be used to improve clustering performance?}

In this context, node labels may be viewed as an abstract aggregation of all additional sources of information. The level of label noise, i.e. the fraction of flipped labels, controls the quality of the additional data sources we might have access to. From a practical point of view, noisy labels may be seen as the result of applying an imperfect classifier that predicts cluster affiliation of a node based on its attributes.\footnote{The classifier may be obtained in a supervised manner in the presence of limited ground-truth label information, or in an unsupervised way without access to any ground-truth labels. If obtaining and applying such a classifier is part of a local clustering procedure, then neither its training nor inference should require full access to the graph. Our empirical results show that simple linear models can work surprisingly well in this context.} More generally, one may think of the noisy labels as the outputs of an encoder function that generates a binary label for a node based on all the additional sources of information we have for that node. The quality of both the encoder function and the data we have is thus represented by the label noise in the abstract setting that we consider.

Due to their wide range of and successful applications in practice \cite{mahoney2012local,kloumann2014community,Gleich15,EJLLSSUL2018,FWY20}, in this work, we focus on graph diffusion-based methods for local clustering. Our contributions are:
\begin{enumerate}[leftmargin=5mm]
  \item Given a graph $G$ and noisy node labels, we introduce a very simple yet surprisingly effective way to utilize the noisy labels for local graph clustering. We construct a weighted graph $G^w$ based on the labels and employ local graph diffusion in the weighted graph.
  \item From a theoretical perspective, we analyze the performance of flow diffusion~\cite{FWY20,CPW21} over a random graph model, which is essentially a local (and more general) version of the stochastic block model. We focus on flow diffusion in our analysis due to its simplicity and good empirical performance~\cite{FWY20,FLY21}. The diffusion dynamics of flow diffusion are similar to that of approximate personalized PageRank~\cite{ACL06} and truncated random walks~\cite{SH13}, and hence our results may be easily extended to other graph diffusions. We provide sufficient conditions on the label noise under which, with high probability, flow diffusion over the weighted graph $G^w$ leads to a more accurate recovery of the target cluster than flow diffusion over the original graph $G$.
  \item We provide an extensive set of empirical experiments over 6 attributed real-world graphs, and we show that our method, which combines multiple sources of additional information, consistently leads to significantly better local clustering results than existing local methods. More specifically, we demonstrate that: (1) reasonably good node labels can be obtained as outputs of a classifier that takes as input the node attributes; (2) the classifier can be obtained with or without ground-truth label information, and it does not require access to the entire graph; (3) employing diffusion in the weighted graph  $G^w$ outperforms both the classifier and diffusion in the original graph $G$. 
\end{enumerate}  

\subsection{Related work}

The local graph clustering problem is first studied by \cite{SH13} using truncated random walks and by \cite{ACL06} using approximate personalized PageRank vectors. There is a long line of work on local graph clustering where the only available source of information is the graph and a seed node~\cite{SH13,ACL06,chung2009local,AP09,ALM13,AGPT2016,PKDJ17,YBLG2017,WFHM2017,FWY20,LG20}. Most of existing local clustering methods are based on the idea of diffusing mass locally in the graph, including random walks~\cite{SH13}, heat diffusion~\cite{chung2009local}, maximum flow~\cite{WFHM2017} and network flow~\cite{FWY20}. We provide more details in Section~\ref{sec:background} where we introduce the background in more depth.

Recently, \cite{YF2023WFD} studies local graph clustering in the presence of node attributes. Under a strong assumption on the separability of node attributes, the authors provide upper bounds on the number of false positives when recovering a target cluster from a random graph model. The assumption requires that the Euclidean distance between every intra-cluster pair of node attributes has to be much smaller than the Euclidean distance between every inter-cluster pair of node attributes. Such an assumption may not hold in practice, for example, when the distributions of node attributes do not perfectly align with cluster affiliation, or when the node attributes are noisy. This restricts the practical effectiveness of their method. Our work takes a very different approach in that we do not restrict to a particular source of additional information or make any assumption on node attributes. Instead, we abstract all available sources of additional information as noisy node labels.

Leveraging multiple sources of information has been extensively explored in the context of graph clustering~\cite{YML13,zhe2019community,sun2020network}, where one has access to both the graph and node attributes, and in the context of semi-supervised learning on graphs~\cite{zhou2003learning,KW2017,hamilton2017inductive}, where one additionally has access to some ground-truth class labels. All of these methods require processing the entire graph and all data points, and hence they are not suitable in the context of local graph clustering. 

\section{Notations and background}\label{sec:background}

We consider a connected, undirected, and unweighted graph $G = (V,E)$, where $V = \{1,2,\ldots,n\}$ is a set of nodes and $E \subseteq V \times V$ is a set of edges. We focus on undirected and unweighted graphs for simplicity in our discussion, but the idea and results extend to weighted and strongly connected directed graphs. We write $i\sim j$ is $(i,j) \in E$ and denote $A \in \{0,1\}^{n \times n}$ the adjacency matrix of $G$, i.e. $A_{ij} = 1$ is $i \sim j$ and $A_{ij} =  0$ otherwise. The degree of a node $i \in V$ is $\deg_G(i) := |\{j \in V : j \sim i\}|$, i.e. the number of nodes adjacent to it. The volume of a subset $U \subseteq V$ is $\vol_G(U) := \sum_{i\in U}\deg_G(i)$.  We use subscripts to indicate the graph we are working with, and we omit them when the graph is clear from context. We write $E(U,W) := \{(i,j) \in E : i \in U, j \in W\}$ as the set of edges connecting two subsets $U,W \subseteq V$. The support of a vector $x \in \mathbb{R}^n$ is $\supp(x) := \{i: x_i \neq 0\}$. Throughout our discussion, we will denote $K$ as the target cluster we wish to recover, and write $K^\setc := V\backslash K$. Each node $i \in V$ is given a label $\tilde{y}_i \in \{0,1\}$. For $c \in \{0,1\}$ we write $\tilde{Y}_c := \{i \in V : \tilde{y}_i = c\}$. Throughout this work, labels that provide ground-truth information about cluster affiliation will be referred to as ground-truth labels. When the word \emph{labels} is mentioned without the modifier \emph{ground-truth}, one should interpret those as the noisy labels $\tilde{y}_i$.

In the traditional setting for local graph clustering, we are given a seed node $s$ which belongs to an unknown target cluster $K$, or sometimes more generally, we are given a set of seed nodes $S \subset V$ such that $S \cap K \neq \emptyset$. It is often assumed that the size of the target cluster $K$ is much smaller than the size of the graph, and hence a good local clustering method should ideally be able to recover $K$ without having to explore the entire graph. In fact, the running times of nearly all existing local clustering algorithms scale only with the size of $K$ instead of the size of the graph~\cite{SH13,ACL06,AGPT2016,WFHM2017,FWY20,MWP23}. This distinguishes the problem of local graph clustering from other problems which require processing the entire graph. In order to obtain a good cluster around the seed nodes, various computational routines have been tested to locally explore the graph structure. One of the most widely used ideas is local graph diffusion. Broadly speaking, local graph diffusion is a process of spreading certain mass from the seed nodes to nearby nodes along the edges of the graph. For example, approximate personalized PageRank iteratively spreads probability mass from a node to its neighbors~\cite{ACL06}, heat kernel PageRank diffuses heat from the seed nodes to the rest of the graph~\cite{chung2009local}, capacity releasing diffusion spreads source mass by following a combinatorial push-relabel procedure~\cite{WFHM2017}, and flow diffusion routes excess mass out of the seed nodes while minimizing a network flow cost~\cite{FWY20}. In a local graph diffusion process, mass tends to spread within well-connected clusters, and hence a careful look at where mass spreads to in the graph often yields a good local clustering result.

Our primary focus is the flow diffusion~\cite{FWY20} and in particular its $\ell_2$-norm version~\cite{CPW21}. We choose flow diffusion due to its good empirical performance and its flexibility in initializing a diffusion process. The flexibility allows us to derive results that are directly comparable with the accuracy of given noisy labels. In what follows, we provide a brief overview of the $\ell_2$-norm flow diffusion. For a more in-depth introduction, we refer the readers to \cite{FWY20} and \cite{CPW21}. In a flow diffusion, we are given $\Delta_s > 0$ units of source mass for each seed node $s$. Every node $i \in V$ has a capacity, $T_i\ge0$ which is the maximum amount of mass it can hold. If $\Delta_i > T_i$ at some node $i$, then we need to spread mass from $i$ to its neighbors in order to satisfy the capacity constraint. Flow diffusion spreads mass along the edges in a way such that the $\ell_2$-norm of mass that is passed over the edges is minimized. In particular, given a weight vector $w \in \mathbb{R}^{|E|}$ (i.e., edge $e$ has resistance $1/w_e$), the $\ell_2$-norm flow diffusion and its dual can be formulated as follows~\cite{CPW21}:
\begin{align}
  &\min \; \frac{1}{2}\sum_{e \in E}f_e^2/w_e \quad \mbox{s.t.} \; B^Tf \le T-\Delta,\label{eq:fd_primal}\\
  &\min \; x^T L x + x^T(T-\Delta) \quad \mbox{s.t.} \; x \ge 0,\label{eq:fd_dual}
\end{align}
where $B \in \mathbb{R}^{|E|\times|V|}$ is the signed edge incidence matrix under an arbitrary orientation of the graph, and $L = B^TWB$ is the (weighted) graph Laplacian matrix where $W$ is the diagonal matrix of $w$. If the graph is unweighted, then we treat $w$ as the all-ones vector. In the special case where we set $\Delta_s=1$ for some node $s$ and 0 otherwise, $T_t=1$ for some node $t$ and 0 otherwise, the flow diffusion problem~(\ref{eq:fd_primal}) reduces to an instance of electrical flows~\cite{CKMST11}. Electrical flows are closely related to random walks and effective resistances, which are useful for finding good global cuts that partition the graph. In a flow diffusion process, one has the flexibility to choose source mass $\Delta$ and sink capacity $T$ so that the entire process only touches a small subset of the graph. For example, if $T_i=1$ for all $i$, then one can show that the optimal solution $x^*$ for the dual problem~(\ref{eq:fd_dual}) satisfies $|\supp(x^*)| \le \sum_{i\in V}\Delta_i$, and moreover, the solution $x^*$ can be obtained in time $O(|\supp(x^*)|)$ which is independent of either $|V|$ or $|E|$~\cite{FWY20}. This makes flow diffusion useful in the local clustering context. The solution $x^*$ provides a scalar embedding for each node in the graph. With properly chosen $\Delta$ and $T$, \cite{FWY20} showed that applying a sweep procedure on the entries of $x^*$ returns a low conductance cluster, and \cite{YF2023WFD} showed that $\supp(x^*)$ overlaps well with an unknown target cluster in a contextual random graph model with very informative node attributes.
\section{Label-based edge weights improve clustering accuracy}\label{sec:main}

In this section, we discuss the problem of local graph clustering with noisy node labels. Given a graph $G = (V,E)$ and noisy node labels $\tilde{y}_i \in \{0,1\}$ for $i \in V$, the goal is to identify an unknown target cluster $K$ around a set of seed nodes. We primarily focus on using the $\ell_2$-norm flow diffusion~(\ref{eq:fd_dual}) for local clustering, but our method and results should easily extend to other diffusion methods such as approximate personalized PageRank.\footnote{We demonstrate this through a comprehensive empirical study in Appendix~\ref{sec:pagerank}.} Let $x^*$ denote the optimal solution of (\ref{eq:fd_dual}), we adopt the same rounding strategy as \cite{HFM2021} and \cite{YF2023WFD}, that is, we consider $\supp(x^*)$ as the output cluster and compare it against the target $K$. Specific diffusion setup with regard to the source mass $\Delta$ and sink capacity $T$ is discussed in Section~\ref{sec:results}. Occasionally we write $x^*(T,\Delta)$ or $x^*(\Delta)$ to emphasize its dependence on the source mass and sink capacity, and we omit them when they are clear from the context. Whenever deemed necessary for the sake of clarity, we use different superscripts $x^*$ and $x^\dagger$ to distinguish solutions of (\ref{eq:fd_dual}) obtained under different edge weights.

We will denote
\begin{equation}\label{eq:a0_a1_def}
  a_1 := |K \cap \tilde{Y}_1|/|K|, \quad a_0 := |K^\setc \cap \tilde{Y}_0|/|K^\setc|,
\end{equation}
which quantify the accuracy of labels within $K$ and outside $K$, respectively. If $a_0 = a_1 = 1$, then the labels perfectly align with cluster affiliation. We say that the labels are noisy if at least one of $a_0$ and $a_1$ is strictly less than 1. In this case, we are interested in how and when the labels can help local graph diffusion obtain a more accurate recovery of the target cluster. In order for the labels to provide any useful information at all, we will assume that the accuracy of these labels is at least 1/2.

\begin{assumption}\label{assum:label_acc}
The label accuracy satisfies $a_0 \ge 1/2$ and $a_1 \ge 1/2$.
\end{assumption}

To exploit the information provided by noisy node labels, we consider a straightforward way to weight the edges of the graph $G=(V,E)$ based on the given labels. Denote $G^w = (V,E,w)$ the weighted graph obtained by assigning edge weights according to $w:E\rightarrow \mathbb{R}$. We set
\begin{equation}\label{eq:edge_weight}
  w(i,j) = 1 \;\;\mbox{if}\;\; \tilde{y}_i = \tilde{y}_j, \quad\mbox{and}\quad  w(i,j) = \epsilon  \;\;\mbox{if}\;\; \tilde{y}_i \neq \tilde{y}_j,
\end{equation}
for some small $\epsilon\in[0,1)$. This assigns a small weight over cross-label edges and maintains unit weight over same-label edges. The value of $\epsilon$ interpolates between two special scenarios. If $\epsilon=1$ then $G^w$ reduces to $G$, and if $\epsilon=0$ then all edges between $\tilde{Y}_1$ and $\tilde{Y}_0$ are removed in $G^w$. In principle, the latter case can be very helpful if the labels are reasonably accurate or the target cluster is well-connected. For example, in Section~\ref{sec:results} we will show that solving the $\ell_2$-norm flow diffusion problem~(\ref{eq:fd_dual}) over the weighted graph $G^w$ by setting $\epsilon=0$ can lead to a more accurate recovery of the target cluster than solving it over the original graph $G$. In practice, one may also choose a small nonzero $\epsilon$ to improve robustness against very noisy labels. In our experiments, we find that the results are not sensitive to the choice of $\epsilon$, as we obtain similar clustering accuracy for $\epsilon\in[10^{-2},10^{-1}]$ over different datasets with varying cluster sizes and label accuracy. We use the F1 score to measure the accuracy of cluster recovery. Suppose that a local clustering algorithm returns a cluster $C \subset V$, then
\[
    \mbox{F1}(C) = \frac{|K|}{|K| + |C\backslash K|/2 + |K \backslash C|/2}.
\]

Even though the edge weights (\ref{eq:edge_weight}) are fairly simple and straightforward, they lead to surprisingly good local clustering results over real-world data, as we will show in Section~\ref{sec:exp}. Before we formally analyze diffusion over $G^w$, let us start with an informal and intuitive discussion on how such edge weights can be beneficial. Consider a step during a generic local diffusion process where mass is spread from a node within the target cluster to its neighbors. Suppose this node has a similar number of neighbors within and outside the target cluster. An illustrative example is shown in Figure~\ref{fig:example_diffusion_original}. In this case, since all edges are treated equally, diffusion will spread a lot of mass to the outside. This makes it very difficult to accurately identify the target cluster without suffering from excessive false positives and false negatives. On the other hand, if the labels have good initial accuracy, for example, if $a_1 > a_0$, then weighting the edges according to (\ref{eq:edge_weight}) will generally make more boundary edges smaller while not affecting as many internal edges. This is illustrated in Figure~\ref{fig:example_diffusion_weighted}. Since a diffusion step spreads mass proportionally to the edge weights~\cite{xing2004weighted,xie2015edge,YF2023WFD}, a neighbor that is connected via a lower edge weight will receive less mass than a neighbor that is connected via a higher edge weight. Consequently, diffusion in such a weighted setting forces more mass to be spread within the target cluster, and hence less mass will leak out. This generally leads to a more accurate recovery of the target cluster. 

\begin{figure}[ht!]
    \centering
    \begin{subfigure}{0.45\textwidth}
        \centering
        \includegraphics[width=\textwidth]{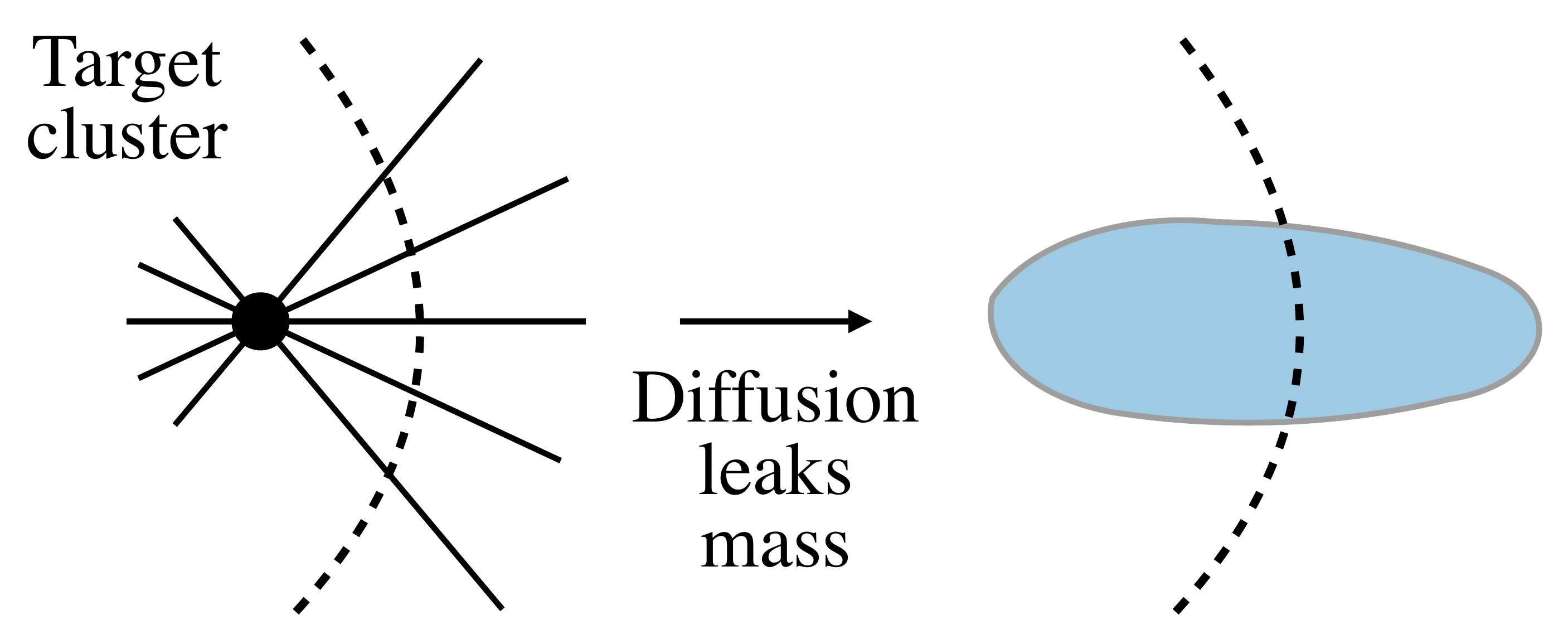}
        \caption{Diffusion in the original graph}
        \label{fig:example_diffusion_original}
    \end{subfigure}%
    \hspace{10mm}
    \begin{subfigure}{0.45\textwidth}
        \centering
        \includegraphics[width=0.9074\textwidth]{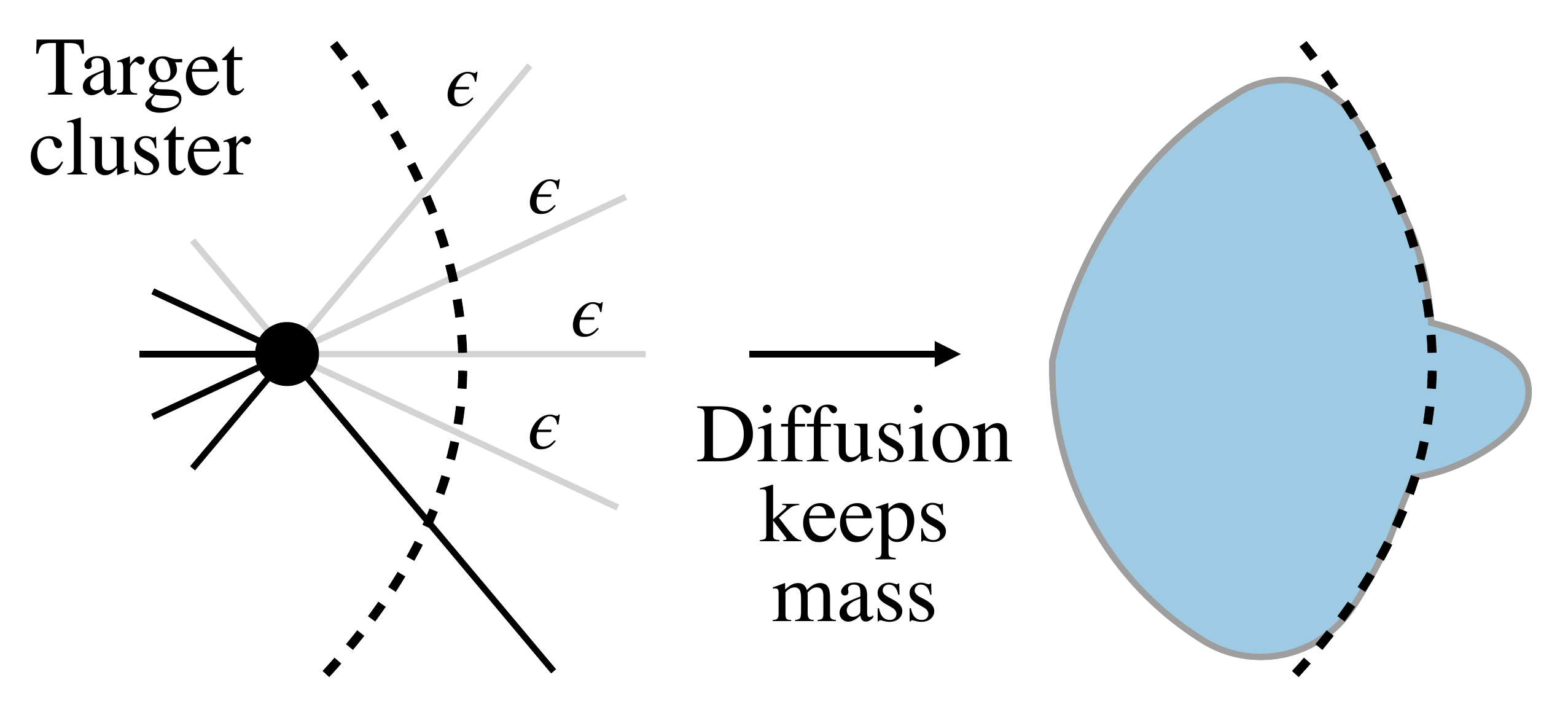}
        \caption{Diffusion in the weighted graph}
        \label{fig:example_diffusion_weighted}
    \end{subfigure}
    \caption{Label-based edge weights avoid mass leakage by attenuating more boundary edges than internal edges. This helps local diffusion more accurately recover the target cluster.}
\end{figure}

\subsection{Guaranteed improvement under a random graph model}\label{sec:results}

Following prior work on statistical analysis of local graph clustering algorithms~\cite{HFM2021,YF2023WFD}, we assume that the graph and the target cluster are generated from the following random model, which can be seen as a localized stochastic block model.

\begin{definition}\label{def:graph_model}[Local random model~\cite{HFM2021,YF2023WFD}]
Given a set of nodes $V$ and a target cluster $K \subset V$. For every pair of nodes $i$ and $j$, if $i,j \in K$ then we draw an edge $(i,j)$ with
probability $p$; if $i \in K$ and $j \in K^\setc$ then we draw an edge $(i,j)$ with probability $q$; otherwise, if both $i,j \in K^\setc$ then we allow
any (deterministic or random) model to draw an edge.
\end{definition}

Let $k = |K|$ and $n = |V|$. For a node $i \in K$, the expected number of internal connections is $p(k-1)$ and the expected number of external connections is $q(n-k)$. We will denote their ratio by
\[
    \gamma := \frac{p(k-1)}{q(n-k)}.
\]
The value of $\gamma$ can be seen as a measure of the structural signal of the target cluster $K$. When $\gamma$ is small, a node within $K$ is likely to have more external connections than internal connections; when $\gamma$ is large, a node within $K$ is likely to have more internal connections than external connections.

Recall our definition of the weighted graph $G^w$ whose edge weights are given in (\ref{eq:edge_weight}) based on node labels. We will study the effect of incorporating noisy labels by comparing the clustering accuracy obtained by solving the flow diffusion problem~(\ref{eq:fd_dual}) over the original graph $G$ and the weighted graph $G^w$, respectively. For the purpose of the analysis, we simply set $\epsilon = 0$ as this is enough to demonstrate the advantage of diffusion over $G^w$. Determining an optimal and potentially nonzero $\epsilon \in [0,1]$ that maximizes accuracy is an interesting question and we provide additional discussion in Appendix~\ref{sec:edge_weight_discussion}. For readers familiar with the literature on local clustering by minimizing conductance, if $p$ and $q$ are reasonably large, e.g., $p\ge 4\log k / k$ and $q \ge 4 \log k/(n-k)$, then a simple computation invoking the Chernoff bound yields that
\begin{align*}
  \cut_{G^w}(K) &\asymp (a_1(1-a_0)+a_0(1-a_1)) \cdot \cut_G(K), \quad \mbox{with high probability},\\
  \vol_{G^w[K]}(K) &\asymp (a_1^2 + (1-a_1)^2) \cdot \vol_{G[K]}(K), \quad \mbox{with high probability},
\end{align*}
where $G[C]$ denotes the subgraph induced on $C \subseteq V$, so $\vol_{G[K]}(K)$ measures the overall internal connectivity of $K$. Since $a_1(1-a_0)+a_0(1-a_1) < a_1^2 + (1-a_1)^2$ as long as $a_0,a_1>1/2$, the target cluster $K$ will have a smaller conductance in $G^w$ as long as the label accuracy is larger than $1/2$. As a result, this potentially improves the detectability of $K$ in $G^w$. Of course, a formal argument requires careful treatments of diffusion dynamics in $G$ and $G^w$, respectively.

We consider local clustering with a single seed node using flow diffusion processes where the sink capacity is set to $T_i = 1$ for all $i \in V$. Although discussed earlier, we summarize below the key steps of local clustering with noisy labels using flow diffusion:

\begin{enumerate}[leftmargin=5mm,noitemsep,nolistsep]
\item[] \hspace{-4mm}{\bf Input:} Graph $G=(V,E)$, seed node $s \in V$, noisy labels $\tilde{y}_i$ for $i \in V$, source mass parameter $\theta$.
\item Create weighted graph $G^w$ based on (\ref{eq:edge_weight}). Set source mass $\Delta_i =\theta$ if $i=s$ and 0 otherwise.
\item Solve the $\ell_2$-norm flow diffusion problem~(\ref{eq:fd_dual}) over $G^w$. Obtain solution $x^\dagger(\theta)$.
\item Return a cluster $C = \supp(x^\dagger(\theta))$.
\end{enumerate}

\vspace{.05in}

\begin{remark}[Locality]
In a practical implementation of the method, Step 1 and Step 2 can be carried out without accessing the full graph. This is because computing the solution $x^\dagger$ only requires access to nodes (and their labels) that either belong to $\supp(x^\dagger)$ or are neighbors of a node in $\supp(x^\dagger)$. See, for example, Algorithm~1 from \cite{YF2023WFD} which provides a local algorithm for solving the $\ell_2$-norm flow diffusion problem~(\ref{eq:fd_dual}). Recall that the amount of source mass controls the size of $\supp(x^\dagger)$. In the above setup, $\theta$ controls the amount of source mass, and since $T_i = 1$ for all $i$, we have $|\supp(x^\dagger)| \le \theta$. Applying the complexity results in \cite{FWY20,YF2023WFD}, we immediately get that the total running time of the above steps are $O(\bar{d}\theta)$ where $\bar{d}$ is the maximum node degree in $\supp(x^*)$. This makes the running time independent of $|V|$.
\end{remark}

Given source mass $\theta\ge0$ at the seed node, let $x^*(\theta)$ and $x^\dagger(\theta)$ denote the solutions obtained from solving (\ref{eq:fd_dual}) over $G$ and $G^w$, respectively. Theorem~\ref{thm:main} provides a lower bound on the F1 score obtained by $\supp(x^\dagger(\theta^\dagger))$ with appropriately chosen source mass $\theta^\dagger$. In addition, it gives a sufficient condition on the label accuracy $a_0$ and $a_1$ such that flow diffusion over $G^w$ with source mass $\theta^\dagger$ at the seed node results in a more accurate recovery of $K$ than flow diffusion over $G$ with any possible choice of source mass. For the sake of simplicity in presentation, Theorem~\ref{thm:main} has been simplified from the long and more formal version provided in Appendix~\ref{sec:theorem_and_proofs} (see Theorem~\ref{thm:formal}). The long version requires weaker assumptions on $p,q$ and provides exact terms without involving asymptotics. 

\begin{theorem}[Simplified version]\label{thm:main}
Suppose that $p=\omega(\frac{\sqrt{\log k}}{\sqrt{k}})$ and $q = \omega(\frac{\log k}{n-k})$. With probability at least $1-1/k$, there is a set $K' \subseteq K$ with cardinality at least $|K|/2$ and a choice of source mass $\theta^\dagger$, such that for every seed node $s \in K'$ we have
\vspace{-2mm}
\begin{align}
  \textnormal{F1}(\supp(x^\dagger(\theta^\dagger))) \ge \left[1+\frac{(1-a_1)}{2}+\frac{(1-a_0)}{2\gamma}+\frac{(1-a_0)^2}{2a_1\gamma^2}\right]^{-1} -o_k(1).\label{eq:f1_lower_bound}
\end{align}
Furthermore, if the accuracy of noisy labels satisfies
\begin{equation}\label{eq:accuracy_cond}
   a_0 \ge 1 - \left(\sqrt{\left(p/\gamma+2a_1-1\right)a_1} - a_1\right)\gamma + o_k(1),
\end{equation}
then we have $\textnormal{F1}(\supp(x^\dagger(\theta^\dagger))) > \max_{\theta \ge 0} \textnormal{F1}(\supp(x^*(\theta)))$.
\end{theorem}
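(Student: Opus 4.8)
The argument splits naturally into two parts: (A) establishing the F1 lower bound \eqref{eq:f1_lower_bound} for flow diffusion over $G^w$, and (B) producing an upper bound on $\max_{\theta\ge 0}\textnormal{F1}(\supp(x^*(\theta)))$ over $G$ and comparing the two. Throughout I would condition on a single high-probability event, established by Chernoff bounds under $p=\omega(\sqrt{\log k}/\sqrt{k})$ and $q=\omega(\log k/(n-k))$, on which every node $i\in K$ has internal degree concentrated at $p(k-1)$, external degree at $q(n-k)$, and—crucially for the weighted graph—degree into each of the four label-refined sets $K\cap\tilde{Y}_1$, $K\cap\tilde{Y}_0$, $K^\setc\cap\tilde{Y}_1$, $K^\setc\cap\tilde{Y}_0$ concentrated at its mean. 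A union bound over the $k$ cluster nodes, charged against the $1/k$ probability budget, then lets me treat these quantities as deterministic up to $o_k(1)$ relative error. The good seed set $K'$ I would take to be the nodes of $K_1:=K\cap\tilde{Y}_1$ obeying these bounds; since $a_1\ge 1/2$ and only $o(k)$ nodes can violate concentration, $|K'|\ge |K|/2$.

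For part (A) the decisive structural fact is that with $\epsilon=0$ all cross-label edges are deleted, so a diffusion seeded at $s\in K'\subseteq\tilde{Y}_1$ satisfies $\supp(x^\dagger)\subseteq\tilde{Y}_1$. Consequently every node of $K\cap\tilde{Y}_0$ is missed, contributing exactly $(1-a_1)|K|$ false negatives and producing the $\tfrac{(1-a_1)}{2}$ term in the denominator of \eqref{eq:f1_lower_bound}. Within $\tilde{Y}_1$ the effective target is $K_1$, with $\vol_{G^w[K_1]}(K_1)\asymp p\,a_1^2 k^2$ and boundary $\cut_{G^w}(K_1)\asymp a_1 k\cdot q(1-a_0)(n-k)$ into $F:=K^\setc\cap\tilde{Y}_1$. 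I would then invoke the flow-diffusion coverage guarantee in the style of \cite{FWY20,HFM2021,YF2023WFD}: because the leakage ratio $\beta:=\cut_{G^w}(K_1)/\vol_{G^w[K_1]}(K_1)\asymp (1-a_0)/(a_1\gamma)$ is small, choosing source mass $\theta^\dagger$ slightly above $|K_1|$ saturates all of $K_1$, so there are no false negatives beyond $K\cap\tilde{Y}_0$. The quantitative heart is then bounding the false positives $|\supp(x^\dagger)\setminus K|=|\supp(x^\dagger)\cap F|$: using the flow certificate $f=WBx^\dagger$ and the optimality conditions of \eqref{eq:fd_dual}, the mass forced across the boundary to saturate its $F$-endpoints gives, to first order, $\tfrac{(1-a_0)}{\gamma}|K|$ saturated nodes, while the $\beta$-fold correction from the mass those nodes in turn push onward contributes $\beta\cdot\tfrac{(1-a_0)}{\gamma}|K|=\tfrac{(1-a_0)^2}{a_1\gamma^2}|K|$. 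Assembling $|K|+\tfrac12|K\setminus C|+\tfrac12|C\setminus K|$ and dividing by $|K|$ yields \eqref{eq:f1_lower_bound} up to the $o_k(1)$ slack.

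For part (B) I would derive the analogous but label-free characterization over $G$, where the relevant cut is the full $\cut_G(K)\asymp kq(n-k)$ and $\vol_{G[K]}(K)\asymp pk^2$, so the leakage ratio is $1/\gamma$ and the false positives scale as $\tfrac1\gamma+\tfrac1{\gamma^2}$ (with no false negatives, since all of $K$ is reachable). The essential point is that this must hold for \emph{every} $\theta\ge 0$: too little source mass leaves cluster nodes uncovered, while enough to cover $K$ necessarily leaks across the undiminished cut. Maximizing over $\theta$ gives a label-independent upper bound on $\textnormal{F1}(\supp(x^*(\theta)))$, and comparing it to \eqref{eq:f1_lower_bound} reduces the claim to an algebraic inequality that is quadratic in $1-a_0$. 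Solving that quadratic for its relevant root reproduces the threshold \eqref{eq:accuracy_cond}; the appearance of $p/\gamma\asymp q(n-k)/k$ traces back to the exact, non-asymptotic degree, cut, and capacity counts (the $p(k-1)$ and additive capacity corrections) that the simplified F1 bound suppresses as $o_k(1)$.

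The main obstacle is the false-positive analysis together with its matching $G$-side \emph{upper} bound. For $\ell_2$ flow diffusion the leaked mass is not a plain combinatorial cut count but the solution of an energy minimization, so extracting both the first-order $\tfrac{(1-a_0)}{\gamma}$ and the second-order $\tfrac{(1-a_0)^2}{a_1\gamma^2}$ terms requires a careful flow/potential argument tracking how saturation propagates one hop beyond $K_1$, uniformly over all seeds $s\in K'$. Equally delicate is the upper bound over $G$: unlike coverage, which only needs one good feasible flow, ruling out high F1 for every $\theta$ demands a genuine lower bound on leakage across the boundary. This is the technically hardest ingredient, and it is precisely the reason the formal statement in Appendix~\ref{sec:theorem_and_proofs} must work with exact rather than asymptotic cut and volume estimates in order to pin down the threshold \eqref{eq:accuracy_cond}.
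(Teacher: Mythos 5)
Your skeleton (lower-bound the F1 of $x^\dagger$ over $G^w$, upper-bound the F1 of $x^*$ over $G$ for every $\theta$, then compare) matches the paper's outline, and you correctly identify that $\epsilon=0$ confines $\supp(x^\dagger)$ to $\tilde{Y}_1$, which is where the $(1-a_1)/2$ false-negative term comes from. But both of your quantitative steps diverge from the paper at exactly the points you flag as ``hardest,'' and in both places you leave a genuine gap. For the $G$-side bound you propose a coverage-versus-leakage trade-off over all $\theta$, requiring a ``genuine lower bound on leakage across the boundary.'' The paper needs nothing of the sort: Proposition~\ref{prop:fp_lower_bound} observes that if $|\supp(x^*)|\ge 2$ for \emph{any} $\theta$, then the second-largest coordinate forces $x^*_s>1$, and once $x^*_s>1$ every neighbor $\ell\sim s$ would receive more than $T_\ell=1$ units if $x^*_\ell=0$, so \emph{all} of $s$'s neighbors enter the support --- in particular its $\ge(1-\delta)q(n-k)$ external neighbors. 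This one-step seed-neighborhood argument is uniform in $\theta$ and is why the threshold \eqref{eq:accuracy_cond} contains $p/\gamma$ (i.e.\ $q(n-k)/k$, a per-node external degree) rather than the cut-based $1/\gamma+1/\gamma^2$ scaling you posit; your plan as stated targets a stronger bound than is needed and would be substantially harder to prove.

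On the $G^w$ side, your decomposition of the false positives into a first-order cut term $\tfrac{(1-a_0)}{\gamma}|K|$ plus a ``$\beta$-fold correction'' $\tfrac{(1-a_0)^2}{a_1\gamma^2}|K|$ reads the algebra backwards from \eqref{eq:f1_lower_bound}; it is not how the bound is actually obtained, and you give no mechanism for proving either term. In Proposition~\ref{prop:fp_upper_bound} the false-positive count is simply mass conservation: since $T_i=1$ everywhere, $|\supp(x^\dagger)|\le\theta^\dagger$, so once $K\cap\tilde{Y}_1\subseteq\supp(x^\dagger)$ is established the false positives are at most $\theta^\dagger-a_1k$. All the work goes into the coverage claim, proved by contradiction: if some $i\in K\cap\tilde{Y}_1$ had $x^\dagger_i=0$, then the two-hop connectivity of $K\cap\tilde{Y}_1$ (Lemma~\ref{lem:connectivity}) forces some neighbor $j$ of $s$ to have tiny potential, which bounds $x^\dagger_s$ and hence bounds the total mass leaving $s$ by $\deg_{G^w}(s)\cdot x^\dagger_s<\theta^\dagger$ --- contradicting the amount of source mass injected. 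The squared factor in $\theta^\dagger$ (whence the $(1-a_0)^2/\gamma^2$ term) arises as the product of the bounds on $x^\dagger_s$ and on $\deg_{G^w}(s)$, not as the second term of a leakage series. Without this coverage argument (or an equivalent), your Part (A) does not close, and without the seed-neighborhood observation your Part (B) does not close either.
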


Theorem~\ref{thm:main} requires additional discussion. First, the lower bound in (\ref{eq:f1_lower_bound}) increases with $a_0$, $a_1$ and $\gamma$. This naturally corresponds to the intuition that, as the label accuracy $a_0$ and $a_1$ become larger, we can expect a more accurate recovery of $K$; while on the other hand, as the local graph structure becomes noisy, i.e. as $\gamma$ becomes smaller, it generally becomes more difficult to accurately recover $K$. Note that when $a_0=a_1=1$, the labels perfectly align with cluster affiliation, and in this special case the lower bound on the F1 score naturally becomes $1-o_k(1)$. This means that the solution $x^\dagger$ over the weighted graph $G^w$ fully leverages the perfect label information. Finally, notice from (\ref{eq:f1_lower_bound}) that the F1 is lower bounded by a constant as long as $\gamma=\Omega_n(1)$, even if $a_0$ is as low as 1/2. In comparison, under a typical local clustering context where $k \ll n$, the F1 score obtained from directly using the noisy labels can be arbitrarily close to 0, i.e. we have $\textnormal{F1}(\tilde{Y}_1) \le o_n(1)$, as long as $a_0$ is bounded away from 1. This demonstrates the importance of employing local diffusion. Even when the initial labels are deemed fairly accurate based on $a_0$ and $a_1$, e.g. $a_1=1$, $a_0=0.99$, the F1 score of the labels can still be very low. In the next section, over both synthetic and real-world data, we show empirically that flow diffusion over the weighted graph $G^w$ can result in surprisingly better F1 even when the F1 of labels is very poor.

Second, if $a_1 = 1$ then (\ref{eq:accuracy_cond}) becomes
\begin{equation}\label{eq:simplified_accuracy_cond}
  a_0 \ge 1 - \left(\sqrt{1 + p/\gamma} - 1\right)\gamma + o_k(1).
\end{equation}
Observe that: (i) The function $\sqrt{\gamma^2+p\gamma}-\gamma$ is increasing with $\gamma$, therefore the left-hand side of (\ref{eq:simplified_accuracy_cond}) increases as $\gamma$ decreases. This corresponds to the intuition that, as the external connectivity of the target cluster becomes larger (i.e. as $\gamma$ decreases), we need more accurate labels to prevent a lot of mass from leaking out. (ii) When $q \ge \Omega(\frac{k}{n-k})$, we have $p/\gamma \ge \Omega(1)$, and (\ref{eq:simplified_accuracy_cond}) further simplifies to $a_0 \ge 1-\Omega(\gamma)$. In this case, if $\gamma$ is also constant, we can expect that flow diffusion over $G^w$ to give a better result even if a constant fraction of labels is incorrect. Here, the required conditions on $q$ and $\gamma$ may look a bit strong because we did not assume anything about the graph structure outside $K$. One may obtain much weaker conditions than (\ref{eq:accuracy_cond}) or (\ref{eq:simplified_accuracy_cond}) under additional assumptions on $K^\setc$.
\section{Experiments}\label{sec:exp}

In this section, we evaluate the effectiveness of employing flow diffusion over the label-weighted graph $G^w$ whose edge weights are given in (\ref{eq:edge_weight}) for local clustering. We will refer to it as Label-based Flow Diffusion ({\bf LFD}). We compare the results with the standard $\ell_2$-norm flow diffusion ({\bf FD})~\cite{FWY20,CPW21}. Whenever a dataset includes node attributes, we also compare with the weighted flow diffusion ({\bf WFD}) from \cite{YF2023WFD}. Due to space constraints, we only report experiments involving flow diffusion in the main paper. We carried out extensive experiments comparing Label-based PageRank (LPR) on the weighted graph $G^w$ with PageRank (PR) on the original graph $G$. The results are similar: LPR consistently outperforms PR. Experiments and results that involve PageRank can be found in Appendix~\ref{sec:appx_experiments}. In addition, we show that our method is not very sensitive to hyperparameter choice (see Appendix~\ref{sec:hyperparameter}) and that our method maintains the fast running time of traditional local diffusion algorithms (see Appendix~\ref{sec:runtime}).

We use both synthetic and real-world data to evaluate the methods. The synthetic data is used to demonstrate our theory and show how local clustering performance improves as label accuracy increases in a controlled environment. For the real-world data, we consider both supervised (i.e. we have access to both node attributes and some ground-truth labels) and unsupervised (i.e. we have access to only node attributes) settings. We show that in both settings, one may easily obtain reasonably good node labels such that, leveraging these labels via diffusion over $G^w$ leads to consistently better results across all 6 datasets, improving the F1 score by up to 13\%.

\subsection{Experiments on synthetic data}\label{sec:synthetic_experiments}

We generate a synthetic graph using the stochastic block model with cluster size $k = 500$ and number of clusters $c = 20$. The number of nodes in the graph equals $n = kc = 10,000$. Two nodes within the same cluster are connected with probability $p=0.05$, and two nodes from different clusters are connected with probability $q=0.025$. For edge weights~(\ref{eq:edge_weight}) in $G^w$ we set $\epsilon=0.05$. We vary the label accuracy $a_0$ and $a_1$ as defined in (\ref{eq:a0_a1_def}) to demonstrate the effects of varying label noise. We consider 3 settings: (i) fix $a_0=0.7$ and vary $a_1\in[1/2,1)$; (ii) fix $a_1=0.7$ and vary $a_0\in[1/2,1)$ ; (iii) vary both $a_0,a_1\in[1/2,1)$ at the same time. For each pair of $(a_0,a_1)$, we run 100 trials. For each trial, we randomly select one of the 20 clusters as the target cluster. Then we generate noisy labels according to $a_0$ and $a_1$. For each trial, we randomly select a node from the target cluster as the seed node. We set the sink capacity $T_i = 1$ for all nodes. For the source mass at the seed node, we set it to $\alpha k$ for $\alpha=2,2.25,\ldots,4$, out of which we select the one that results in the highest F1 score based on $\supp(x^*)$, where $x^*$ is the optimal solution of the flow diffusion problem~(\ref{eq:fd_dual}).\footnote{We do this for our synthetic experiments to illustrate how label accuracy affects local clustering performance. In practice, without the ground-truth information, one may fix a reasonable $\alpha$, e.g. $\alpha=2$, and then apply a sweep-cut procedure on $x^*$. We adopt the latter approach for experiments on real-world data.} 

We compare the F1 scores achieved by FD and LFD over varying levels of label accuracy. Recall that FD does not use and hence is not affected by the labels at all, whereas LFD uses label-based edge weights from (\ref{eq:edge_weight}). The results of over 100 trials are shown in Figure~\ref{fig:synthetic_fd}. In addition to the results obtained from flow diffusion, we also include the F1 scores obtained from the labels alone (\textbf{Labels}), i.e. we compare $\tilde{Y}_1=\{i \in V: \tilde{y}_i=1\}$ against $K$. Not surprisingly, as predicted by (\ref{eq:f1_lower_bound}), the F1 of LFD increases as at least one of $a_0,a_1$ increases. Moreover, LFD already outperforms FD at reasonably low label accuracy, e.g. when $a_0,a_1=0.7$ and the F1 of the labels alone is as low as 0.2. This shows the effectiveness of incorporating noisy labels and employing diffusion over the label-weighted graph $G^w$. Even fairly noisy node labels can boost local clustering performance.

\begin{figure}[ht!]
    \centering
    \begin{subfigure}{0.33\textwidth}
        \centering
        \includegraphics[width=\textwidth]{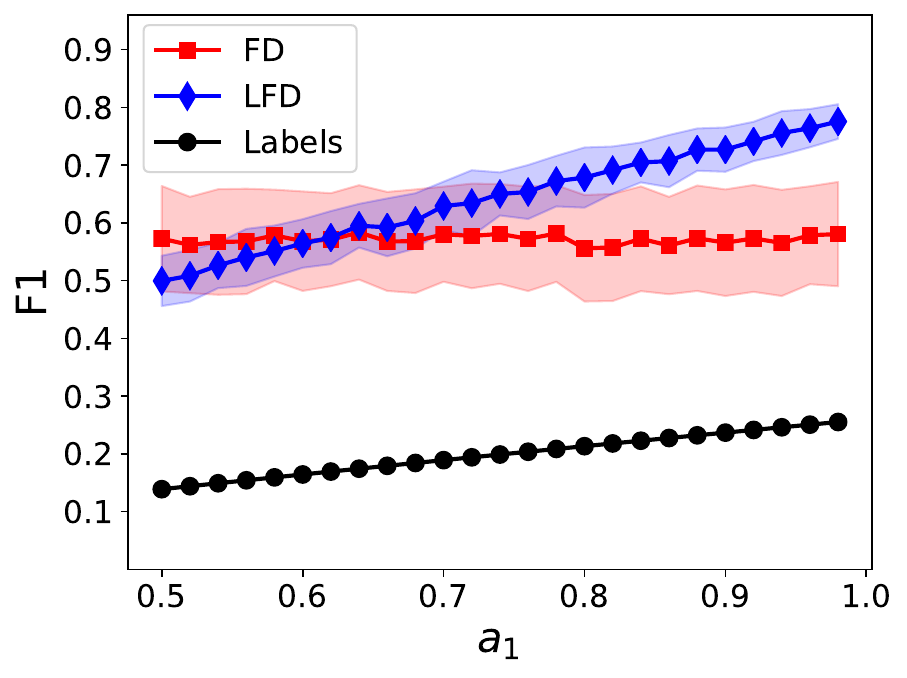}
    \end{subfigure}%
    \begin{subfigure}{0.33\textwidth}
        \centering
        \includegraphics[width=\textwidth]{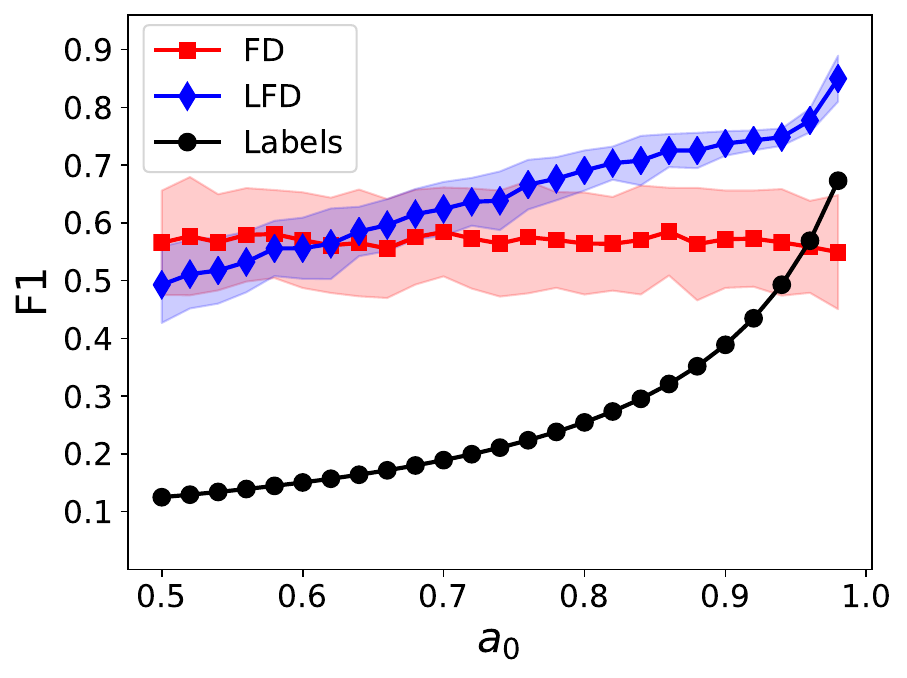}
    \end{subfigure}%
    \begin{subfigure}{0.33\textwidth}
        \centering
        \includegraphics[width=\textwidth]{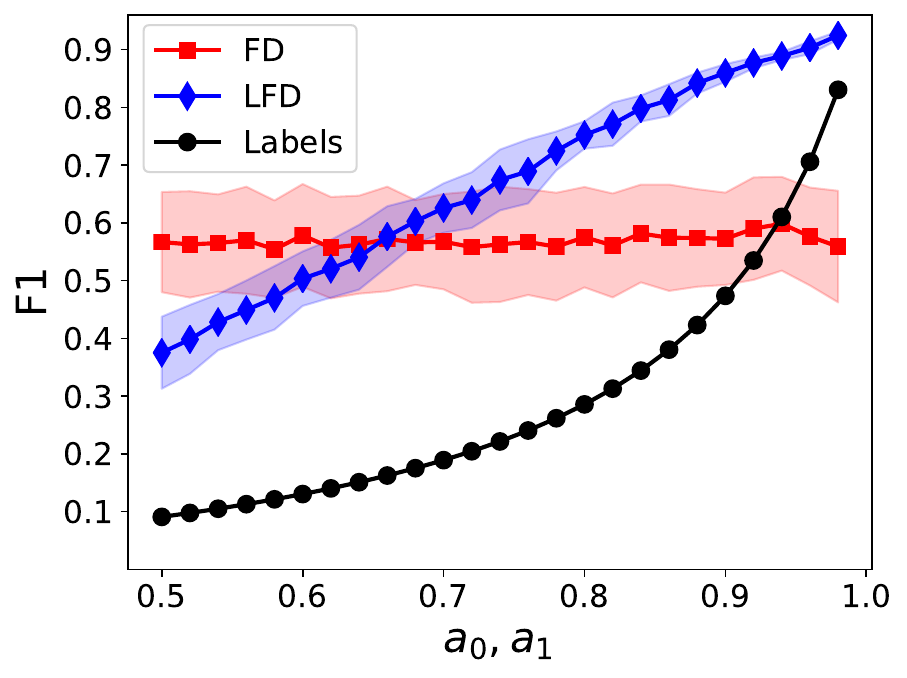}
    \end{subfigure}
    \caption{F1 scores obtained by employing flow diffusion over the original graph (FD) and the label-weighted graph (LFD). For comparison, we also plot the F1 obtained by the noisy labels (Labels). The solid line and error bar show mean and standard deviation over 100 trials, respectively. As discussed in Section~\ref{sec:results}, even fairly noisy labels can already help boost local clustering performance.}
    \label{fig:synthetic_fd}
\end{figure}

\subsection{Experiments on real-world data}
\label{sec:real_world_exp}
We carry out experiments over the following 6 real-world attributed graph datasets. We include all 3 datasets used in \cite{YF2023WFD}—namely, Amazon Photo~\cite{mcauley2015image}, Coauthor CS, and Coauthor Physics~\cite{SMBG18}—to ensure compatibility of results. Additionally, we use 3 well-established graph machine learning benchmarks: Amazon Computers~\cite{SMBG18}, Cora~\cite{mccallum2000automating} and Pubmed~\cite{sen2008collective}. We provide the most informative results in this section. Detailed empirical setup and additional results are found in Appendix~\ref{sec:appx_experiments}.

\begin{figure}[ht!]
    \centering
    \includegraphics[width=\textwidth]{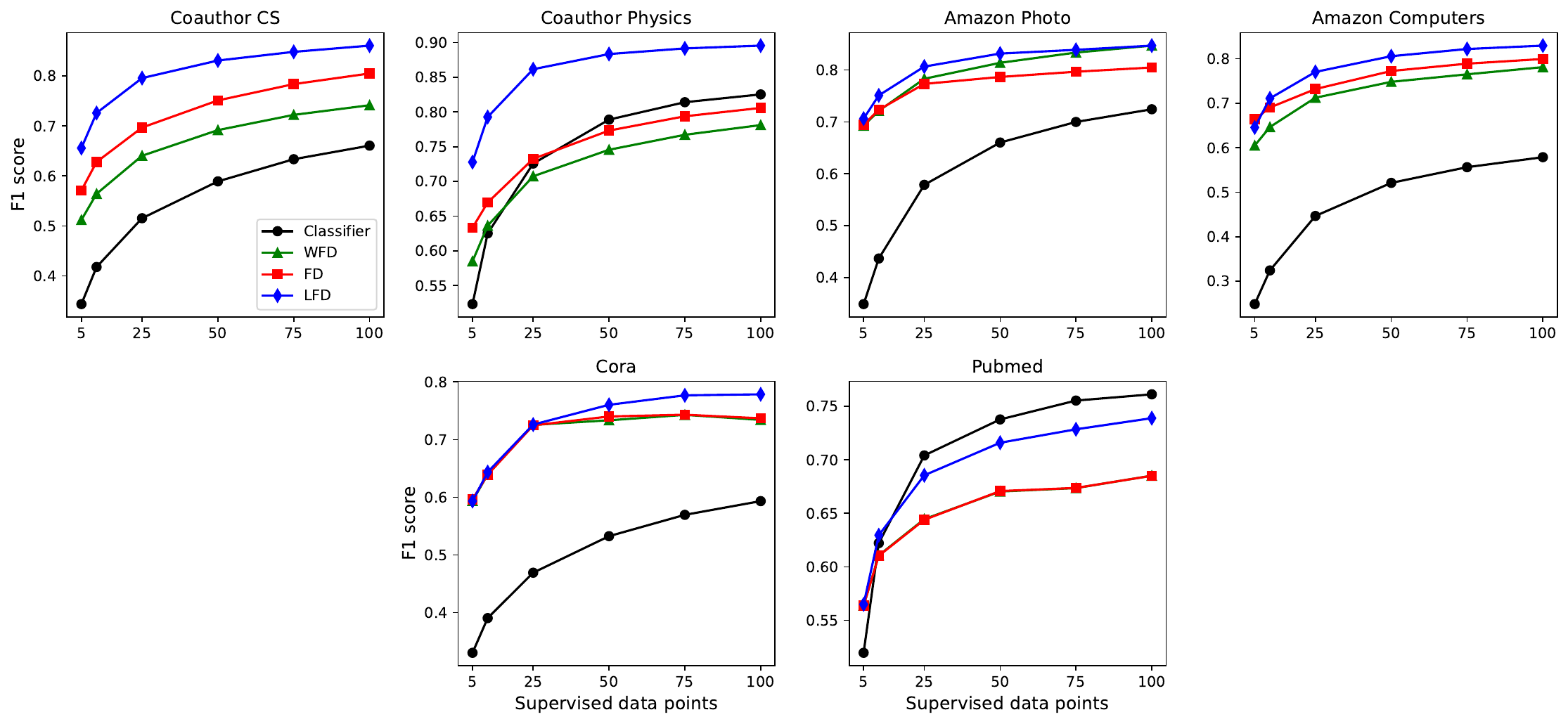}
    \caption{F1 scores for local clustering using Flow Diffusion (FD), Weighted Flow Diffusion (WFD), Label-based Flow Diffusion (LFD), and Logistic Regression (Classifier) with an increasing number of positive and negative ground-truth samples.}
    \label{fig:realworld_sup}
\end{figure}

We divide the experiments into two settings. In the first, we assume access to a selected number of ground-truth labels, evenly sampled from both the target and non-target classes. These nodes are utilized to train a classifier (without graph information). The predictions of the classifier are then used as noisy labels to construct the weighted graph $G^w$ as defined in (\ref{eq:edge_weight}), and we set $\epsilon=0.05$ as in the synthetic experiments. We use all the positive nodes, i.e. nodes that belong to the target cluster based on the given ground-truth labels, as seed nodes during the diffusion process. For each cluster in each dataset, we compare LFD against FD and WFD over 100 trials. For each trial, a classifier is trained using randomly sampled positive and negative nodes which we treat as ground-truth information. Figure~\ref{fig:realworld_sup} shows the average F1 obtained by each method versus the number of samples used for training the classifier. As illustrated in Figure~\ref{fig:realworld_sup}, using the outputs from a weak classifier (e.g. with an F1 score as low as 40\%) as noisy labels already enhances the diffusion process, obtaining an improvement as high as 13\% over other methods (see Coauthor Physics with 25 positive and negative nodes). The increasing availability of ground-truth positive nodes typically benefits all diffusion processes. However, as seen in Cora, additional seed nodes can also increase the risk of mass leakage outside the target class and hence result in lower accuracy. In such cases, the learned classifier mitigates this problem by reducing inter-edge weights.

In the second set of experiments, we consider the setting where we are only given a single seed with no access to ground-truth labels or a pre-trained classifier.
To demonstrate the effectiveness of our method, a heuristic approach is adopted.
First, we solve the flow diffusion problem~(\ref{eq:fd_dual}) over the original graph $G$ and get a solution $x^*$. Then, we select 100 nodes with the highest and lowest values in $x^*$, which are designated as positive and negative nodes, respectively. We use these nodes to train a binary classifier. As demonstrated in prior work~\cite{FWY20,YF2023WFD}, nodes with the highest values in $x^*$ typically belong to the target cluster, whereas nodes with the lowest values in $x^*$ --- typically zero --- are outside of the target cluster.
We use the outputs of the classifier as noisy node labels to construct the weighted graph $G^w$. We test this approach against the standard and weighted flow diffusion, both in the single and multi-seed settings. In the multi-seed setting, the 100 (pseudo-)positive nodes are used as seed nodes. Additionally, for each dataset, we compare LFD with the best-performing baseline and report the improvement in Table~\ref{tab:f1_sampled}. The results demonstrate a consistent improvement of LFD over other methods across all datasets.

\begin{table}[ht!]
\caption{
Comparison of F1 scores across datasets for Flow Diffusion (FD), Weighted Flow Diffusion (WFD), and Label-based Flow Diffusion (LFD) in the absence of ground-truth information}
\label{tab:f1_sampled}
  \centering
  \renewcommand{\arraystretch}{1.1}
  \setlength\tabcolsep{2.5pt}
\small
\begin{tabular}{lcccccccc}
  \toprule
  Dataset                                     & \begin{tabular}[c]{@{}c@{}}FD\vspace{-0.5em}\\ {\scriptsize (single-seed)}\end{tabular} & \begin{tabular}[c]{@{}c@{}}WFD\vspace{-0.5em}\\ {\scriptsize (single-seed)}\end{tabular} & \begin{tabular}[c]{@{}c@{}}FD\vspace{-0.5em}\\{\scriptsize (multi-seed)}\end{tabular} & \begin{tabular}[c]{@{}c@{}}WFD\vspace{-0.5em}\\{\scriptsize (multi-seed)}\end{tabular}& LFD & Improv. ($\pm$) & Improv. (\%) \\ \hline
Coauthor CS & 43.8 & 39.9 & \underline{50.5} & 47.1 & \textbf{63.1} & +12.6 & +24.9 \\
Coauthor Physics & \underline{62.8} & 57.0 & 55.5 & 51.1 & \textbf{72.9} & +10.1 & +16.1 \\
Amazon Photo & 54.5 & 57.4 & 62.1 & \underline{62.6} & \textbf{66.8} & +4.2 & +6.7 \\
Amazon Computers & 56.2 & 53.3 & \underline{58.2} & 54.6 & \textbf{60.4} & +2.2 & +3.8 \\
Cora & 33.3 & 33.7 & 55.4 & \underline{55.4} & \textbf{56.5} & +1.1 & +1.9 \\
Pubmed & 53.0 & 53.2 & \underline{53.9} & 53.9 & \textbf{55.3} & +1.4 & +2.7 \\ \hline
\rule{0pt}{2.5ex}AVERAGE 
& 50.6 & 49.1 & 55.9 & 54.1 & 62.5 & +5.3 & +9.3\\
  \bottomrule
\end{tabular}
\end{table}

\section{Conclusion}

We introduce the problem of local graph clustering with access to noisy node labels. This new problem setting serves as a proxy for working with real-world graph data with additional node information. Moreover, such setting allows for developing local methods that are agnostic to the actual sources and formats of additional information which can vary from case to case. We propose a simple label-based edge weight scheme to utilize the noisy labels, and we show that performing local clustering over the weighted graph is effective both in theory and in practice.

\section*{Acknowledgement}
K.~Fountoulakis would like to acknowledge the support of the Natural Sciences and Engineering Research Council of Canada (NSERC). Cette recherche a \'et\'e financ\'ee par le Conseil de recherches en sciences naturelles et en g\'enie du Canada (CRSNG), [RGPIN-2019-04067, DGECR-2019-00147].

S.~Yang would like to acknowledge the support of the Natural Sciences and Engineering Research Council of Canada (NSERC).

\bibliography{refs}

\begin{thebibliography}{10}

\bibitem{ALM13}
Z.~Allen-Zhu, L.~Silvio, and S.~M. Vahab.
\newblock A local algorithm for finding well-connected clusters.
\newblock In {\em International Conference on Machine Learning (ICML)}, 2013.

\bibitem{ACL06}
R.~Andersen, F.~Chung, and K.~Lang.
\newblock Local graph partitioning using pagerank vectors.
\newblock {\em IEEE Symposium on Foundations of Computer Science (FOCS)}, 2006.

\bibitem{AGPT2016}
R.~Andersen, S.~O. Gharan, Y.~Peres, and L.~Trevisan.
\newblock Almost optimal local graph clustering using evolving sets.
\newblock {\em Journal of the ACM}, 63(2), 2016.

\bibitem{CPW21}
L.~Chen, R.~Peng, and D.~Wang.
\newblock 2-norm flow diffusion in near-linear time.
\newblock In {\em IEEE Symposium on Foundations of Computer Science (FOCS)},
  2022.

\bibitem{choromanski2023taming}
K.~Choromanski.
\newblock Taming graph kernels with random features.
\newblock In {\em International Conference on Machine Learning (ICML)}, 2023.

\bibitem{CKMST11}
P.~Christiano, J.~A. Kelner, A.~Madry, D.~A. Spielman, and S.-H. Teng.
\newblock Electrical flows, laplacian systems, and faster approximation of
  maximum flow in undirected graphs.
\newblock In {\em ACM Symposium on Theory of Computing (STOC)}, 2011.

\bibitem{chung2009local}
F.~Chung.
\newblock A local graph partitioning algorithm using heat kernel pagerank.
\newblock {\em Internet Mathematics}, 6(3):315--330, 2009.

\bibitem{EJLLSSUL2018}
C.~Eksombatchai, P.~Jindal, J.~Z. Liu, Y.~Liu, R.~Sharma, C.~Sugnet, M.~Ulrich,
  and J.~Leskovec.
\newblock Pixie: A system for recommending $3+$ billion items to $200+$ million
  users in real-time.
\newblock In {\em Proceedings of the 2018 World Wide Web Conference (WWW)},
  2018.

\bibitem{FLY21}
K.~Fountoulakis, P.~Li, and S.~Yang.
\newblock Local hyper-flow diffusion.
\newblock In {\em Advances in Neural Information Processing Systems (NeurIPS)},
  2021.

\bibitem{FMGW23}
K.~Fountoulakis, M.~Liu, D.~F. Gleich, and M.~W Mahoney.
\newblock Flow-based algorithms for improving clusters: A unifying framework,
  software, and performance.
\newblock {\em SIAM Review}, 65(1):59--143, 2023.

\bibitem{FKSCM2017}
K.~Fountoulakis, F.~Roosta-Khorasani, J.~Shun, X.~Cheng, and M.~W. Mahoney.
\newblock Variational perspective on local graph clustering.
\newblock {\em Mathematical Programming}, 174:553--573, 2017.

\bibitem{FWY20}
K.~Fountoulakis, D.~Wang, and S.~Yang.
\newblock p-norm flow diffusion for local graph clustering.
\newblock {\em International Conference on Machine Learning (ICML)}, 2020.

\bibitem{Gleich15}
D.~F. Gleich.
\newblock Pagerank beyond the web.
\newblock {\em SIAM Review}, 57(3):321--363, 2015.

\bibitem{HFM2021}
W.~Ha, K.~Fountoulakis, and M.~W. Mahoney.
\newblock Statistical guarantees for local graph clustering.
\newblock {\em The Journal of Machine Learning Research}, 22(1):6538--6591,
  2021.

\bibitem{hamilton2017inductive}
W.~Hamilton, Z.~Ying, and J.~Leskovec.
\newblock Inductive representation learning on large graphs.
\newblock In {\em Advances in Neural Information Processing Systems (NeurIPS)},
  2017.

\bibitem{KW2017}
T.~N. Kipf and M.~Welling.
\newblock Semi-supervised classification with graph convolutional networks.
\newblock In {\em International Conference on Learning Representations (ICLR)},
  2017.

\bibitem{kloumann2014community}
I.~M. Kloumann and J.~M. Kleinberg.
\newblock Community membership identification from small seed sets.
\newblock In {\em ACM SIGKDD International Conference on Knowledge Discovery
  and Data Mining (KDD)}, 2014.

\bibitem{LG20}
M.~Liu and D.~F. Gleich.
\newblock Strongly local p-norm-cut algorithms for semi-supervised learning and
  local graph clustering.
\newblock In {\em Advances in Neural Information Processing Systems (NeurIPS)},
  2020.

\bibitem{MS2021}
P.~Macgregor and H.~Sun.
\newblock Local algorithms for finding densely connected clusters.
\newblock In {\em International Conference on Machine Learning (ICML)}, 2021.

\bibitem{mahoney2012local}
M.~W. Mahoney, L.~Orecchia, and N.~K. Vishnoi.
\newblock A local spectral method for graphs: With applications to improving
  graph partitions and exploring data graphs locally.
\newblock {\em The Journal of Machine Learning Research}, 13(1):2339--2365,
  2012.

\bibitem{MWP23}
D.~Mart{\'i}nez-Rubio, E.~Wirth, and S.~Pokutta.
\newblock Accelerated and sparse algorithms for approximate personalized
  pagerank and beyond.
\newblock In {\em Proceedings of Thirty Sixth Conference on Learning Theory
  (COLT)}, 2023.

\bibitem{mcauley2015image}
J.~McAuley, C.~Targett, Q.~Shi, and A.~Van~Den Hengel.
\newblock Image-based recommendations on styles and substitutes.
\newblock In {\em ACM International Conference on Research and Development in
  Information Retrieval (SIGIR)}, 2015.

\bibitem{mccallum2000automating}
A.~K. McCallum, K.~Nigam, J.~Rennie, and K.~Seymore.
\newblock Automating the construction of internet portals with machine
  learning.
\newblock {\em Information Retrieval}, 3:127--163, 2000.

\bibitem{OZ2014}
L.~Orecchia and Z.~A. Zhu.
\newblock Flow-based algorithms for local graph clustering.
\newblock In {\em ACM-SIAM Symposium on Discrete Algorithms (SODA)}, 2014.

\bibitem{perozzi2014deepwalk}
B.~Perozzi, R.~Al-Rfou, and S.~Skiena.
\newblock Deepwalk: Online learning of social representations.
\newblock In {\em ACM SIGKDD International Conference on Knowledge Discovery
  and Data Mining (KDD)}, 2014.

\bibitem{AP09}
A.~Reid and P.~Yuval.
\newblock Finding sparse cuts locally using evolving sets.
\newblock In {\em ACM Symposium on Theory of Computing (STOC)}, 2009.

\bibitem{sen2008collective}
P.~Sen, G.~Namata, M.~Bilgic, L.~Getoor, B.~Galligher, and T.~Eliassi-Rad.
\newblock Collective classification in network data.
\newblock {\em AI magazine}, 29(3):93--93, 2008.

\bibitem{SMBG18}
O.~Shchur, M.~Mumme, A.~Bojchevski, and S.~G{\"u}nnemann.
\newblock Pitfalls of graph neural network evaluation.
\newblock {\em Relational Representation Learning Workshop, NeurIPS 2018},
  2018.

\bibitem{PKDJ17}
P.~Shi, K.~He, D.~Bindel, and J.~Hopcroft.
\newblock Local {L}anczos spectral approximation for community detection.
\newblock In {\em European Conference on Machine Learning and Principles and
  Practice of Knowledge Discovery in Databases (ECML-PKDD)}, 2017.

\bibitem{SH13}
D.~A. Spielman and S.-H. Teng.
\newblock A local clustering algorithm for massive graphs and its application
  to nearly linear time graph partitioning.
\newblock {\em SIAM Journal on computing}, 42(1):1--26, 2013.

\bibitem{sun2020network}
H.~Sun, F.~He, J.~Huang, Y.~Sun, Y.~Li, C.~Wang, Liang He, Z.~Sun, and X.~Jia.
\newblock Network embedding for community detection in attributed networks.
\newblock {\em ACM Transactions on Knowledge Discovery from Data (TKDD)},
  14(3):1--25, 2020.

\bibitem{WFHM2017}
D.~Wang, K.~Fountoulakis, M.~Henzinger, M.~W. Mahoney, and S.~Rao.
\newblock Capacity releasing diffusion for speed and locality.
\newblock {\em International Conference on Machine Learning (ICML)}, 2017.

\bibitem{weng2010twitterrank}
J.~Weng, E.-P. Lim, J.~Jiang, and Q.~He.
\newblock Twitterrank: Finding topic-sensitive influential twitterers.
\newblock In {\em ACM International Conference on Web Search and Data Mining
  (WSDM)}, 2010.

\bibitem{xie2015edge}
W.~Xie, D.~Bindel, A.~Demers, and J.~Gehrke.
\newblock Edge-weighted personalized pagerank: Breaking a decade-old
  performance barrier.
\newblock In {\em ACM SIGKDD International Conference on Knowledge Discovery
  and Data Mining (KDD)}, 2015.

\bibitem{xing2004weighted}
W.~Xing and A.~Ghorbani.
\newblock Weighted pagerank algorithm.
\newblock In {\em IEEE Annual Conference on Communication Networks and Services
  Research (CNSR)}, 2004.

\bibitem{YML13}
J.~Yang, J.~McAuley, and J.~Leskovec.
\newblock Community detection in networks with node attributes.
\newblock In {\em IEEE International Conference on Data Mining (ICDM)}, 2013.

\bibitem{YF2023WFD}
S.~Yang and K.~Fountoulakis.
\newblock Weighted flow diffusion for local graph clustering with node
  attributes: an algorithm and statistical guarantees.
\newblock In {\em International Conference on Machine Learning (ICML)}, 2023.

\bibitem{YBLG2017}
H.~Yin, A.~R. Benson, J.~Leskovec, and D.~F. Gleich.
\newblock Local higher-order graph clustering.
\newblock In {\em ACM SIGKDD International Conference on Knowledge Discovery
  and Data Mining (KDD)}. Association for Computing Machinery, 2017.

\bibitem{zhe2019community}
C.~Zhe, A.~Sun, and X.~Xiao.
\newblock Community detection on large complex attribute network.
\newblock In {\em ACM SIGKDD International Conference on Knowledge Discovery \&
  Data Mining}, 2019.

\bibitem{zhou2003learning}
D.~Zhou, O.~Bousquet, T.~Lal, J.~Weston, and B.~Sch{\"o}lkopf.
\newblock Learning with local and global consistency.
\newblock In {\em Advances in neural information processing systems}, 2003.

\end{thebibliography}
\bibliographystyle{plain}

\newpage
\appendix
\section{Formal statement of Theorem~\ref{thm:main} and proofs}\label{sec:theorem_and_proofs}

For convenience let us remind the reader the notations that we use. We use $K \subset V$ to denote the target cluster and we write $K^\setc := V\backslash K$. Each node $i \in V$ comes with a noisy label $\tilde{y}_i \in \{0,1\}$. For $c \in \{0,1\}$ we write $\tilde{Y}_c := \{i \in V: \tilde{y}_i = c\}$. The label accuracy are characterized by $a_0=|K^\setc \cap \tilde{Y}_0|/|K^\setc|$ and $a_1=|K\cap \tilde{Y}_1|/|K|$. Given a graph $G=(V,E)$ and a target cluster $K$ generated by the random model in Definition~\ref{def:graph_model}, let $n:=|V|$, $k:=|K|$, and $\gamma := \frac{p(k-1)}{q(n-k)}$. Given edge weight $w : E \rightarrow \mathbb{R}_+$ or equivalently a vector $w \in \mathbb{R}_+^{|E|}$, let $G^w$ denote the weighted graph obtained by assigning edge weights to $G$ according to $w$. In our analysis we consider edge weights given by (\ref{eq:edge_weight}), that is $w(i,j)=1$ if $\tilde{y}_i = \tilde{y}_j$, and $w(i,j) = \epsilon$ if $\tilde{y}_i \neq \tilde{y}_j$, and we set $\epsilon=0$.

Recall that the $\ell_2$-norm flow diffusion problem~(\ref{eq:fd_dual}) is set up as follows. The sink capacity is set to $T_i=1$ for all $i \in V$. We set $T_i=1$ instead of $T_i=\deg_G(i)$ as used in \cite{FWY20} because it allows us to derive bounds on the F1 score in a more direct way. In practice, both can be good choices. For a given seed node $s \in K$, we set source mass $\Delta_s=\theta$ at node $s$ for some $\theta>0$, and we set $\Delta_i=0$ for all other nodes $i\neq s$. Given source mass $\theta$ at the seed node, let $x^*(\theta)$ and $x^\dagger(\theta)$ denote the solutions of the $\ell_2$-norm flow diffusion problem~(\ref{eq:fd_dual}) over $G$ and $G^w$, respectively. We write $x^*(\theta)$ and $x^\dagger(\theta)$ to emphasize their dependence on $\theta$. When the choice of $\theta$ is clear from the context, we simply write $x^*$ and $x^\dagger$.

We state the formal version of Theorem~\ref{thm:main} below in Theorem~\ref{thm:formal}. First, let us define two numeric quantities. Given $0<\delta_1,\delta_2,\delta_3\le1$, let
\[
r := \frac{(1+\delta_1)(1+\delta_1+\frac{2}{p(k-1)})}{(1-\delta_1)(1-\delta_2))}, \quad \mbox{and} \quad r' := r/(1-\delta_3).
\]

\begin{theorem}[Formal version of Theorem~\ref{thm:main}]\label{thm:formal}
Suppose that $p \ge \max(\frac{(6+\epsilon_1)}{\delta_1^2}\frac{\log k}{k-2}, \frac{(\sqrt{8}+\epsilon_2)}{\delta_2\sqrt{1-\delta_1}}\frac{\sqrt{\log k}}{\sqrt{k-2}})$ and $q \ge \frac{(3+\epsilon_3)}{\delta_3^2}\frac{\log k}{n-k}$ for some $\epsilon_1,\epsilon_2,\epsilon_3 > 0$ and $0 <\delta_1,\delta_2,\delta_3 \le 1$. Then with probability at least $1-3k^{-\epsilon_1/6} - k^{-\epsilon_2} - k^{-\epsilon_3/3}$, there is a set $K' \subseteq K$ with cardinality at least $|K|/2$ and a choice of source mass $\theta^\dagger$, such that for every seed node $s \in K'$ with source mass $\theta^\dagger$ at the seed node, we get
\begin{equation}
  \textnormal{F1}(\supp(x^\dagger(\theta^\dagger))) \ge \left[1 + \frac{a_1}{2}\left(\left(\dfrac{a_1\gamma\frac{(k-2)}{(k-1)} + (1-a_0)}{a_1\gamma\frac{(k-2)}{(k-1)}}\right)^2r-1\right) + \frac{1-a_1}{2}\right]^{-1}\label{eq:f1_lower_bound_formal}.
\end{equation}
In this case, if the accuracy of noisy labels satisfies,
\begin{equation}\label{eq:accuracy_cond_formal}
   a_0 \ge 1 - \frac{(k-2)}{(k-1)}\left(\sqrt{\left(\frac{p/\gamma}{r'}+\frac{2a_1-1}{r}\right)a_1} - a_1\right)\gamma,
\end{equation}
then we have
\[
  \textnormal{F1}(\supp(x^\dagger(\theta^\dagger))) > \max_{\theta \ge 0} \textnormal{F1}(\supp(x^*(\theta))).
\]
\end{theorem}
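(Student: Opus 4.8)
The plan is to reduce the F1 bound to two geometric quantities of the diffusion support in $G^w$ and then compare against the best achievable support in $G$. The crucial first observation is that setting $\epsilon = 0$ disconnects $G^w$ along the label partition: every edge between $\tilde{Y}_1$ and $\tilde{Y}_0$ is deleted, so a diffusion seeded at $s \in \tilde{Y}_1$ has $\supp(x^\dagger(\theta)) \subseteq \tilde{Y}_1$ for all $\theta$. Consequently the false positives satisfy $\supp(x^\dagger)\setminus K \subseteq K^\setc \cap \tilde{Y}_1$, and the false negatives automatically contain the $(1-a_1)|K|$ mislabeled target nodes $K \cap \tilde{Y}_0$, which can never be reached. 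So the only things left to control are (a) how much of the \emph{good} region $K \cap \tilde{Y}_1$ the support covers, and (b) how much mass leaks into the false-positive region $K^\setc \cap \tilde{Y}_1$. Since $a_1 \ge 1/2$ we have $|K \cap \tilde{Y}_1| \ge |K|/2$, and the eligible seed set $K'$ will be the well-behaved nodes of $K \cap \tilde{Y}_1$, giving $|K'|\ge|K|/2$.

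Next I would establish the concentration statements that feed $\delta_1,\delta_2,\delta_3$. Treating the label assignment as fixed and the edges as random, Chernoff bounds give, with the stated failure probabilities $3k^{-\epsilon_1/6}$, $k^{-\epsilon_2}$, $k^{-\epsilon_3/3}$: internal degrees of nodes in $K$ concentrated within $(1\pm\delta_1)p(k-1)$, a finer second-moment/volume concentration carrying the slack $\delta_2$ (the two lower bounds on $p$, of order $\log k/k$ and $\sqrt{\log k/k}$, drive $\delta_1$ and $\delta_2$ respectively), and an upper bound on boundary degrees into $K^\setc$ carrying the slack $\delta_3$ (driven by the bound on $q$). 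These concentrations pin down the $G^w$-degree of a good node as $a_1\gamma\tfrac{k-2}{k-1}+(1-a_0)$ measured in units of its degree to $K\cap\tilde{Y}_1$, which is exactly the quantity in \eqref{eq:f1_lower_bound_formal}; the ratios $r,r'$ are precisely the resulting multiplicative concentration gaps.

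The heart of the argument, and the main obstacle, is the diffusion analysis itself. Writing $\rho_w := \frac{a_1\gamma\frac{k-2}{k-1}+(1-a_0)}{a_1\gamma\frac{k-2}{k-1}}$, I would choose the source mass $\theta^\dagger$ of the order of $|K\cap\tilde{Y}_1|$ (inflated by the overshoot factor), recalling that $T_i=1$ forces $|\supp(x^\dagger)|\le\theta^\dagger$, and prove a two-sided control on the support via the primal--dual/KKT structure of \eqref{eq:fd_primal}--\eqref{eq:fd_dual}. A coverage (lower) bound shows the support fills \emph{all} of the well-connected region $K\cap\tilde{Y}_1$, following the spreading arguments of \cite{HFM2021,YF2023WFD}, so the only false negatives are the inherent $(1-a_1)|K|$. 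A containment (upper) bound shows that only the excess mass pushed across the (now small, label-filtered) cut can activate external nodes; bounding the number of such nodes is where the square enters, since the $\ell_2$ penalty on flow makes the count of activated false positives scale like the square of the boundary-to-interior ratio, yielding $a_1|K|(\rho_w^2 r - 1)$. Substituting false positives $a_1|K|(\rho_w^2 r-1)$ and false negatives $(1-a_1)|K|$ into $\mathrm{F1}(C)=|K|/(|K|+|C\setminus K|/2+|K\setminus C|/2)$ gives \eqref{eq:f1_lower_bound_formal}. Pinning down the exact constant in front of the squared leakage ratio (that the $\ell_2$-optimal flow does not over-activate external nodes beyond what the cut concentration permits) is the most delicate step.

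Finally, for the comparison I would run the identical coverage/containment analysis on $G$, where no edges are filtered: the good region is all of $K$ but the boundary sees the full external mass, so with $\rho_g$ the corresponding leakage ratio the analysis gives $\rho_g^2 r' \asymp 1 + (r/r')\,p/\gamma$, and with no inherent label false negatives this yields an \emph{upper} bound $\max_{\theta\ge0}\mathrm{F1}(\supp(x^*(\theta)))\le[1+\tfrac12(\rho_g^2 r'-1)]^{-1}$. Since F1 is decreasing in the denominator, the desired strict inequality is equivalent to $a_1(\rho_w^2 r - 1) + (1-a_1) < \rho_g^2 r' - 1$, a quadratic inequality in $\rho_w$. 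Solving it for $\rho_w$ and back-substituting $\rho_w = 1 + \frac{(1-a_0)(k-1)}{a_1\gamma(k-2)}$ isolates $a_0$ and, after the $r,r'$ bookkeeping collapses the cross terms, produces exactly the square-root threshold \eqref{eq:accuracy_cond_formal}.
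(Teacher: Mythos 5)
Your analysis of the weighted graph $G^w$ is essentially the paper's: you choose $\theta^\dagger$ so that a coverage argument forces $K\cap\tilde{Y}_1\subseteq\supp(x^\dagger)$, use $T_i=1$ to get $|\supp(x^\dagger)|\le\theta^\dagger$ so that the false positives are at most $\theta^\dagger-a_1k=a_1k(\rho_w^2r-1)$, and charge the $(1-a_1)k$ mislabeled nodes of $K$ as unavoidable false negatives; plugging these into the F1 formula gives (\ref{eq:f1_lower_bound_formal}). (One small correction of emphasis: in the paper the square in $\rho_w^2$ arises inside the coverage step itself --- a contradiction argument bounds $x^\dagger_s$ by one ratio factor and $\deg_{G^w}(s)$ by another, and their product caps the total source mass that could have been injected --- rather than from a separate containment analysis of how the $\ell_2$ penalty distributes leaked mass. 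The containment side is the trivial $|\supp(x^\dagger)|\le\theta^\dagger$ you already noted.)

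The genuine gap is in the comparison with $G$. You propose to ``run the identical coverage/containment analysis'' on $G$ to get an upper bound on $\max_{\theta\ge0}\textnormal{F1}(\supp(x^*(\theta)))$, but that machinery only produces \emph{lower} bounds on F1 (upper bounds on false positives, lower bounds on coverage), and it is tied to one particular choice of $\theta$, whereas the theorem quantifies over all $\theta$. What is needed --- and what the paper's Proposition~\ref{prop:fp_lower_bound} supplies --- is a \emph{lower} bound on the number of false positives that holds for \emph{every} source mass: for any $\theta$, if $|\supp(x^*)|\ge2$ then the node of second-largest potential can only receive mass from $s$, forcing $x^*_s>1$, which in turn activates \emph{every} neighbor of $s$; since $s$ has at least $(1-\delta_3)q(n-k)=(1-\delta_3)p(k-1)/\gamma$ external neighbors with high probability, every nontrivial support incurs that many false positives regardless of $\theta$. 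This single observation is what makes the right-hand side of the comparison a bound on the maximum over $\theta$, and it is the ingredient missing from your outline; without it, your claimed bound $\bigl[1+\tfrac12(\rho_g^2r'-1)\bigr]^{-1}$ on the best achievable F1 in $G$ is unsupported, even though its numerical form happens to agree with the paper's. The final algebraic step --- equating the two F1 expressions and solving the resulting quadratic in $\rho_w$ for $a_0$ --- matches the paper once that proposition is in place.
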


{\bf Outline:} The proof is based on (1) lower bounding the number of false positives incurred by $\supp(x^*)$ (Proposition~\ref{prop:fp_lower_bound}), (2) upper bounding the number of false positives incurred by $\supp(x^\dagger)$ (Proposition~\ref{prop:fp_upper_bound}), and (3) combine both lower and upper bounds.

We will use some concentration results concerning the connectivity of the random graphs $G$ and $G^w$. These results are mostly derived from straightforward applications of the Chernoff bound. For completeness we state these results and provide their proofs at the end of this section.

Let $\hat{x}$ denote a generic optimal solution of the flow diffusion problem~(\ref{eq:fd_dual}), which is obtained over either $G$ or $G^w$. We will heavily use the following two important properties of $\hat{x}$ (along with its physical interpretation). We refer the reader to \cite{FWY20} for details.
\begin{enumerate}[leftmargin=5mm]
\vspace{-0.05in}
  \item The solution $\hat{x} \in \mathbb{R}^n$ defines a flow diffusion over the underlying graph such that, for all $i,j\in V$, the amount of mass that node $i$ sends to node $j$ along eege $(i,j)$ is given by $w(i,j)\cdot(\hat{x}_i-\hat{x}_j)$.
  \item For all $i\in V$, $\hat{x}_i>0$ only if the total amount of mass that node $i$ has equals $T_i$, i.e., $\Delta_i + \sum_{j \sim i}w(i,j)\cdot(\hat{x}_j-\hat{x}_i) = T_i$.
  \item For all $i \in V$, $\hat{x}_i > 0$ if and only if the total amount of mass that node $i$ receives exceeds $T_i$, i.e., $\Delta_i + \sum_{j\sim i, \hat{x}_j>\hat{x}_i}w(i,j)\cdot(\hat{x}_j-\hat{x}_i) > T_i$.
\end{enumerate}

Using these properties we may easily obtain a lower bound on the number of false positives incurred by $\supp(x^*)$. Recall that $x^*$ denotes the optimal solution of (\ref{eq:fd_dual}) over $G$, with sink capacity $T_i=1$ for all $i$. We state the result in Proposition~\ref{prop:fp_lower_bound}.

\begin{proposition}\label{prop:fp_lower_bound}
If $q \ge \frac{(3+\epsilon)}{\delta^2}\frac{\log k}{n-k}$ for some $\epsilon > 0$ and $0 < \delta \le 1$, then with probability at least $1-k^{-\epsilon/3}$, for every seed node $s \in K$, if $|\supp(x^*)| \ge 2$ then we have that
\[
    |\supp(x^*) \cap K^\setc| > (1-\delta)\frac{p}{\gamma}(k-1)
\]
\end{proposition}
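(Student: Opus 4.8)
The plan is to separate a probabilistic ingredient from a deterministic structural ingredient. The probabilistic part controls the external degrees of the nodes of $K$: for a fixed $i \in K$, the quantity $|E(\{i\}, K^\setc)|$ is a sum of $n-k$ independent $\Ber(q)$ variables with mean $q(n-k) = \frac{p}{\gamma}(k-1)$, so the multiplicative Chernoff bound in the form $\Pr[X \le (1-\delta)\mu] \le \exp(-\delta^2\mu/3)$ gives a failure probability at most $\exp(-\delta^2 q(n-k)/3) \le k^{-(3+\epsilon)/3}$ under the stated hypothesis on $q$. Taking a union bound over the $k$ nodes of $K$ then yields, with probability at least $1 - k^{-\epsilon/3}$, the event $\mathcal{E}$ that every $i \in K$ has more than $(1-\delta)\frac{p}{\gamma}(k-1)$ external neighbors. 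The constant $3$ in the denominator of the Chernoff bound is precisely what produces the exponent $\epsilon/3$ in the final probability.

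The second, deterministic part is to show that on $\mathcal{E}$ the support must \emph{engulf} the external neighborhood of the seed, so that $|\supp(x^*) \cap K^\setc| \ge |E(\{s\}, K^\setc)| > (1-\delta)\frac{p}{\gamma}(k-1)$. To do this I would argue entirely from the optimality properties of $x^*$. First I would show the seed $s$ is the unique maximizer of $x^*$: the node $i$ attaining $\max_j x^*_j$ is saturated, and since every neighbor gap $x^*_j - x^*_i \le 0$, the saturation identity $\Delta_i + \sum_{j\sim i}(x^*_j - x^*_i) = 1$ forces $\Delta_i \ge 1$, which only $s$ satisfies (ties at the maximum are excluded for the same reason, since a tied node would carry zero source). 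Next I would show $x^*_s > 1$ whenever $|\supp(x^*)| \ge 2$: letting $j^\star$ be the node of second-largest value, its only possible strictly larger neighbor is $s$, so the fact that $j^\star$ receives more than $T_{j^\star}=1$ unit of mass forces $s \sim j^\star$ and $x^*_s - x^*_{j^\star} > 1$, hence $x^*_s > 1$. Finally I would combine $x^*_s > 1$ with the reception condition: any external neighbor $j$ of $s$ with $x^*_j = 0$ would receive at least $x^*_s > 1 = T_j$ units from $s$ alone, contradicting that $j$ receives at most $T_j$; therefore every external neighbor of $s$ lies in $\supp(x^*)$. Combining the two parts, on $\mathcal{E}$ every external neighbor of $s$ belongs to $\supp(x^*) \cap K^\setc$ and there are more than $(1-\delta)\frac{p}{\gamma}(k-1)$ of them, which is the claim.

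The main obstacle I expect is the structural step $x^*_s > 1$, since it is where the variational structure of flow diffusion must be turned into a clean combinatorial statement. The two delicate points are ruling out ties at the maximum and correctly arguing that the second-largest node can only draw mass from $s$; both rely on reading the saturation and reception conditions carefully rather than on any graph randomness. Everything else is either a routine Chernoff-plus-union-bound estimate or a one-line consequence of the reception condition, so I would concentrate the effort on making the ``unique maximizer / gap larger than one'' argument airtight.
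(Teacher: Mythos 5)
Your proposal is correct and follows essentially the same route as the paper: a Chernoff-plus-union bound on the external degrees of nodes in $K$ (the paper's Lemma on external degree in $G$), combined with the deterministic argument that when $|\supp(x^*)|\ge 2$ the largest non-seed node must draw more than one unit of mass from $s$, forcing $x^*_s>1$ and hence pulling every neighbor of $s$ into the support. The only cosmetic difference is your preliminary ``unique maximizer'' step, which the paper skips by directly taking the maximum over non-seed nodes.
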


\begin{proof}
If $|\supp(x^*)| \ge 2$ it means that $x^*_s > 0$ as otherwise we must have $x^*=0$. Moreover, let $i \in V$ be such that $i \neq s$ and $x_i^* \ge x^*_j$ for all $j \neq s$. Then we must have that $i$ is a neighbor of $s$. Because $\Delta_i=0$, $x^*_i > 0$ and $x^*_i \ge x^*_j$ for all $j\neq s$, we know that the amount of mass that node $s$ sends to node $i$ is strictly larger than 1, and hence $x^*_s > x^*_i+1 > 1$. But then this means that we must have $x^*_{\ell}>0$ for all $\ell \sim s$. By Lemma~\ref{lem:external_degree} we know that with probability at least $1-k^{-\epsilon/3}$, every node $i \in K$ has more than $(1-\delta)q(n-k)$ neighbors in $K^\setc$. This applies to $s$ which was chosen arbitrarily from $K$. Therefore we have that with probability at least $1-k^{-\epsilon/3}$, for every seed node $s \in K$, if $|\supp(x^*)|\ge2$ then $|\supp(x^*)\cap K^\setc|>(1-\delta)q(n-k)$. The required result then follows from our definition that $\gamma = \frac{p(k-1)}{q(n-k)}$.
\end{proof}

On the other hand, Proposition~\ref{prop:fp_upper_bound} provides an upper bound on the number of false positives incurred by $\supp(x^\dagger)$ under appropriately chosen source mass $\theta^\dagger$ at the seed node. Its proof is based on upper bounding the total amount of mass that leaks to the outside of the target cluster during a diffusion process, similar to the strategy used in the proof of Theorem~3.5 in \cite{YF2023WFD}.

\begin{proposition}\label{prop:fp_upper_bound}
If $p \ge \max(\frac{(6+\epsilon_1)}{\delta_1^2}\frac{\log k}{k-2}, \frac{(\sqrt{8}+\epsilon_2)}{\delta_2\sqrt{1-\delta_1}}\frac{\sqrt{\log k}}{\sqrt{k-2}})$ for some $0< \delta_1,\delta_2 \le 1$ and $\epsilon_1,\epsilon_2 > 0$, then with probability at least $1-3k^{-\epsilon_1/6}-k^{-\epsilon_2}$, for every seed node $s \in K \cap \tilde{Y}_1$ with source mass
\[
    \theta^\dagger = \left(\frac{a_1\gamma\frac{(k-2)}{(k-1)} + (1-a_0)}{a_1\gamma\frac{(k-2)}{(k-1)}}\right)^2ra_1k,
\]
we have that $K\cap\tilde{Y}_1 \subseteq \supp(x^\dagger)$ and 
\[
    |\supp(x^\dagger) \cap K^\setc| \le \left(\left(\frac{a_1\gamma\frac{(k-2)}{(k-1)} + (1-a_0)}{a_1\gamma\frac{(k-2)}{(k-1)}}\right)^2r-1\right)a_1k.
\]
\end{proposition}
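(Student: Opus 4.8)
The plan is to prove the two assertions separately, doing the easy bookkeeping first and isolating the delicate saturation step. Throughout write $K_1 := K \cap \tilde{Y}_1$ and $K^\setc_1 := K^\setc \cap \tilde{Y}_1$, so that $|K_1| = a_1 k$ and $|K^\setc_1| = (1-a_0)(n-k)$, and set $C := \bigl(\frac{a_1\gamma\frac{k-2}{k-1} + (1-a_0)}{a_1\gamma\frac{k-2}{k-1}}\bigr)^2 r$ so that $\theta^\dagger = C a_1 k$. The first observation is structural: taking $\epsilon=0$ gives every cross-label edge weight $0$, so by property (1) of $x^\dagger$ no flow crosses a cross-label edge. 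Since the seed $s$ has label $1$, any node reachable from $s$ along positive-weight edges also has label $1$, whence $\supp(x^\dagger) \subseteq \tilde{Y}_1 = K_1 \cup K^\setc_1$. In particular the only possible false positives lie in $K^\setc_1$, which already explains why $a_0$ enters only through the size $|K^\setc_1|=(1-a_0)(n-k)$ of the region into which mass can escape.

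Second, I would assemble the degree concentration facts from the Chernoff-type lemmas stated at the end of the section. Restricted to label-$1$ nodes, $G^w[K_1]$ is an Erd\H{o}s--R\'enyi graph with parameter $p$ on $a_1 k$ vertices, so under the stated lower bounds on $p$ every node of $K_1$ has internal degree concentrated around $p(|K_1|-1)$; invoking Assumption~\ref{assum:label_acc} ($a_1\ge 1/2$) to bound $|K_1|-1 \ge a_1(k-2)$ is exactly what produces the $(k-2)$ factors. Likewise each node of $K_1$ has leakage degree into $K^\setc_1$ concentrated around $q(1-a_0)(n-k)$. The multiplicative slacks $(1+\delta_1)$, $(1-\delta_1)$, $(1-\delta_2)$ coming out of these Chernoff bounds are precisely what is packaged into $r$, and the $\frac{2}{p(k-1)}$ correction absorbs the $-1$ in the internal degree count together with the seed's own unit capacity.

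Third --- and this is the crux --- I would prove the saturation statement $K_1 \subseteq \supp(x^\dagger)$ for the prescribed $\theta^\dagger$, by controlling the total mass leaking out of $K_1$ as it fills to capacity, in the spirit of the proof of Theorem~3.5 in \cite{YF2023WFD}. Because $K_1$ is well-connected, mass injected at the single seed reaches every node of $K_1$; the source mass needed to push each node to its unit capacity is governed by the internal connectivity, while the fraction that escapes across the $K_1$--$K^\setc_1$ boundary is governed by the ratio of leakage degree to internal degree, namely $\frac{(1-a_0)}{a_1\gamma(k-2)/(k-1)}$. I expect the single-seed potential profile to be the source of the square in $C$: the potential at $s$ must be over-provisioned enough that even the least-filled node of $K_1$ reaches capacity, and the induced boundary potential then inflates the leakage by the square of the expansion factor $\bigl(1+\frac{(1-a_0)}{a_1\gamma(k-2)/(k-1)}\bigr)$. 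Concretely, I would either exhibit an explicit primal-feasible flow for (\ref{eq:fd_primal}) that saturates $K_1$ and whose cost certifies bounded leakage, or argue by contradiction that an unsaturated node of $K_1$ forces excess mass back through the boundary; both routes combine properties (2)--(3) of $x^\dagger$ with the degree concentration of the previous step.

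Finally, the false-positive bound follows from mass conservation once saturation is established. Summing the balance equation in property (2) over $i \in K_1$, the internal flows cancel antisymmetrically and only the boundary flow survives, giving $\theta^\dagger = |K_1| + (\text{mass held in } K^\setc_1) = a_1 k + (\text{leaked mass})$, using that all mass stays within $K_1 \cup K^\setc_1$. Since every node of $\supp(x^\dagger)\cap K^\setc_1$ holds exactly its unit capacity by property (2), the number of such nodes is at most the leaked mass $\theta^\dagger - a_1 k = (C-1)a_1 k$, which is the claimed bound. The main obstacle is the saturation step: pinning down exactly how much the single-seed potential must be over-provisioned to fill all of $K_1$, and showing this over-provisioning inflates the leakage by precisely the squared expansion factor (and no more), rather than the cruder linear bound one would get from a naive uniform-potential heuristic.
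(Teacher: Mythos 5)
Your overall skeleton matches the paper's: prove $K\cap\tilde{Y}_1\subseteq\supp(x^\dagger)$ by contradiction, then get the false-positive bound by counting settled mass ($\theta^\dagger \ge |\supp(x^\dagger)| \ge a_1k + |\supp(x^\dagger)\cap K^\setc|$), which is exactly how the paper closes. Your preliminary observations (no flow crosses zero-weight edges, so $\supp(x^\dagger)\subseteq\tilde{Y}_1$; the degree concentration producing the $(1\pm\delta)$ factors in $r$ and the $(k-2)$ via $a_1\ge 1/2$) are also correct and consistent with Lemmas~\ref{lem:external_degree} and~\ref{lem:weighted_degree}.

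However, there is a genuine gap at precisely the step you flag as the main obstacle, and the missing ingredient is concrete: the two-hop connectivity structure of Lemma~\ref{lem:connectivity}. The paper's contradiction argument runs as follows. Let $F$ be the set of neighbors of the seed $s$ inside $K\cap\tilde{Y}_1$. If an unsaturated node $i\in K\cap\tilde{Y}_1$ lies in $F$, then $x^\dagger_s\le 1+x^\dagger_i=1$ and the outflow from $s$ is at most $\deg_{G^w}(s)$, already contradicting the size of $\theta^\dagger$. If $i\notin F$, Lemma~\ref{lem:connectivity} guarantees $i$ has at least $(1-\delta_1)(1-\delta_2)p^2(a_1k-1)$ neighbors in $F$; since $i$ receives at most one unit in total, some $j\in F$ must have the tiny potential $x^\dagger_j\le 1/\big((1-\delta_1)(1-\delta_2)p^2(a_1k-1)\big)$. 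That tiny potential caps the mass $j$ can send out, hence caps the mass $j$ receives, and since the mass $s$ sends to $j$ equals $x^\dagger_s-x^\dagger_j$, it caps $x^\dagger_s$ itself. The outflow from $s$ is then at most $\deg_{G^w}(s)\cdot x^\dagger_s$, and the squared expansion factor in $\theta^\dagger$ arises because \emph{both} $\deg_{G^w}(s)$ and the bound on $x^\dagger_s$ carry one factor of $p(a_1k-1)+(1-a_0)q(n-k)$. This is a different mechanism from the one you conjecture (over-provisioning the seed potential so the least-filled node saturates, with the boundary potential inflating leakage quadratically); without the two-hop paths relating an arbitrary unsaturated node back to the seed's immediate neighborhood, neither of your two proposed routes pins down the specific constant $r$ or the exponent $2$, and the primal-feasible-flow route in particular would only certify an upper bound on the optimal cost, not the saturation property of the optimal dual solution $x^\dagger$.
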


\begin{proof}
To see that $K\cap\tilde{Y}_1 \subseteq \supp(x^\dagger)$, let us assume for the sake of contradiction that $x^\dagger_i = 0$ for some $i \in K\cap\tilde{Y}_1$. This means that node $i$ receives at most 1 unit mass, because otherwise we would have $x^\dagger_i > 0$. We also know that $i\neq s$ because $\Delta_s > 1$. Denote $F := \{j \in K\cap\tilde{Y}_1 : j \sim s\}$. We will consider two cases depending on if $i \in F$ or not. 

Suppose that $i \in F$. Then we have that $x^\dagger_s - x^\dagger_i \le 1$ because node $i$ receives at most 1 unit mass from node $s$. This means that $x^\dagger_s \le 1 + x^\dagger_i = 1$. It follows that the total amount of mass which flows out of node $s$ is
\begin{align*}
    \sum_{\ell \sim s}(x^\dagger_s - x^\dagger_{\ell})
    \le \sum_{\ell \sim s}x^\dagger_s
    \le \deg_{G^w}(s)
    \le (1+\delta)(p(a_1k-1)+(1-a_0)q(n-k)),
\end{align*}
where the last inequality follows from Lemma~\ref{lem:weighted_degree}. Therefore, we get that the total amount of source mass is at most
\begin{align*}
    \theta^\dagger
    &\le (1+\delta)(p(a_1k-1)+(1-a_0)q(n-k)) + 1 \\
    &= (1+\delta)p(a_1k-1)\frac{a_1p(k-1/a_1)+(1-a_0)q(n-k)}{a_1p(k-1/a_1)} + 1 \\
    &= (1+\delta)p(a_1k-1)\frac{a_1\gamma\frac{(k-1/a_1)}{(k-1)} + (1-a_0)}{a_1\gamma\frac{(k-1/a_1)}{(k-1)}} + 1\\
    &\le (1+\delta)p(a_1k-1)\frac{a_1\gamma\frac{(k-2)}{(k-1)} + (1-a_0)}{a_1\gamma\frac{(k-2)}{(k-1)}} + 1\\
    &\le (1+\delta)(1+2/k)\left(\frac{a_1\gamma\frac{(k-2)}{(k-1)} + (1-a_0)}{a_1\gamma\frac{(k-2)}{(k-1)}}\right)a_1k
    < \theta^\dagger,
\end{align*}
where the second last inequality follows from $a_1 \ge 1/2$. This is a contradiction, and hence we must have $i \not\in F$. 

Now, suppose that $i \not\in F$. Then we know that the total amount of mass that node $i$ receives from its neighbors is at most 1. In particular, node $i$ receives at most 1 unit mass from nodes in $F$. This means that
\[
    \sum_{\substack{j\sim i \\ j\in F}} x^\dagger_j = \sum_{\substack{j\sim i \\ j\in F}} (x^\dagger_j-x^\dagger_i) \le 1.
\]
By Lemma~\ref{lem:connectivity}, we know that with probability at least $1-2k^{-\epsilon_1/6}-k^{-\epsilon_2}$, node $i$ has at least $(1-\delta_1)(1-\delta_2)p^2(a_1k-1)$ neighbors in $F$, and thus
\begin{align*}
    \sum_{\substack{j \in F \\ j\sim i}}x^\dagger_j \le 1
\implies \min_{\substack{j \in F}}x^\dagger_j \le \frac{1}{(1-\delta_1)(1-\delta_2)p^2(a_1k-1)}
\end{align*}
Therefore, let $j \in F$ a node such that $x^\dagger_j \le x^\dagger_{\ell}$ for all $\ell \in F$, then with probability at least $1-2k^{-\epsilon_1/6}-k^{-\epsilon_2}$,
\begin{equation}\label{eq:x_j_upper_bound}
    x^\dagger_j \le \frac{1}{(1-\delta_1)(1-\delta_2)p^2(a_1k-1)}.
\end{equation}
By Lemma~\ref{lem:connectivity}, with probability at least $1-2k^{-\epsilon_1/6}-k^{-\epsilon_2}$, node $j$ has at least $(1-\delta_1)(1-\delta_2)p^2(a_1k-1)-1$ neighbors in $F$. Since $x^\dagger_j \le x^\dagger_{\ell}$ for all $\ell \in F$ and $x^\dagger_j \le x^\dagger_s$, we know that
\begin{equation}\label{eq:x_j_neighbors}
    |\{\ell \in V: \ell \sim j \; \mbox{and} \; x^\dagger_{\ell} \ge x^\dagger_j\}| \ge (1-\delta_1)(1-\delta_2)p^2(a_1k-1).
\end{equation}
Therefore, with probability at least $1-3k^{-\epsilon_1/3}-k^{-\epsilon_2}$, the total amount of mass that node $j$ sends out to its neighbors is at most
\begin{align*}
    &\sum_{\ell\sim j} (x^\dagger_j - x^\dagger_{\ell})
    \le \sum_{\substack{\ell\sim j \\ x^\dagger_{\ell}\le x^\dagger_j}} (x^\dagger_j - x^\dagger_{\ell})
    \le \sum_{\substack{\ell\sim j \\ x^\dagger_{\ell}\le x^\dagger_j}} x^\dagger_j\\
    \stackrel{\text{(i)}}{\le} ~&\Big((1+\delta_1)(p(a_1k-1)+(1-a_0)q(n-k)) - (1-\delta_1)(1-\delta_2)p^2(a_1k-1)\Big)x^\dagger_j \\
    \stackrel{\text{(ii)}}{\le}~& \frac{(1+\delta_1)}{(1-\delta_1)(1-\delta_2)}\left(\frac{p(a_1k-1)+(1-a_0)q(n-k)}{p^2(a_1k-1)}\right) - 1.
\end{align*}
where (i) follows from Lemma~\ref{lem:weighted_degree} and (\ref{eq:x_j_neighbors}), and (ii) follows from (\ref{eq:x_j_upper_bound}). Since node $j$ settles 1 unit mass, the total amount of mass that node $j$ receives from its neighbors is therefore at most
\[
    \frac{(1+\delta_1)}{(1-\delta_1)(1-\delta_2)}\left(\frac{p(a_1k-1)+(1-a_0)q(n-k)}{p^2(a_1k-1)}\right).
\]
Recall that the amount of mass that node $j$ receives from node $s$ is given by $x^\dagger_s - x^\dagger_j$, and hence we get
\begin{equation}\label{eq:x_s_upper_bound}
    x^\dagger_s \le \frac{(1+\delta_1)}{(1-\delta_1)(1-\delta_2)}\left(\frac{p(a_1k-1)+(1-a_0)q(n-k)}{p^2(a_1k-1)}\right) + x^\dagger_j.
\end{equation}
Apply the same reasoning as before, we get that with probability at least $1-3k^{-\epsilon_1/6}-k^{-\epsilon_2}$, the total amount of mass that is sent out from node $s$ is
\begin{align*}
    &\sum_{\ell \sim s}(x^\dagger_s - x^\dagger_{\ell})
    ~<~ \deg_{G^w}(s) \cdot x^\dagger_s
    ~\stackrel{\text{(i)}}{\le}~ (1+\delta_1)(p(a_1k-1)+(1-a_0)q(n-k)) \cdot x^\dagger_s\\
    \stackrel{\text{(ii)}}{\le}~& \frac{(1+\delta_1)}{(1-\delta_1)(1-\delta_2)}\left((1+\delta_1)\frac{(p(a_1k-1)+(1-a_0)q(n-k))^2}{p^2(a_1k-1)^2} \right.\\
    &\hspace{40mm} \left.+\frac{p(a_1k-1)+(1-a_0)q(n-k)}{p^2(a_1k-1)^2}\right)(a_1k-1)\\
    \stackrel{\text{(iii)}}{\le}~& \frac{(1+\delta_1)(1+\delta_1+\frac{2}{p(k-1)})}{(1-\delta_1)(1-\delta_2)}\left(\frac{p(a_1k-1)+(1-a_0)q(n-k)}{p(a_1k-1)}\right)^2(a_1k-1)\\
    \stackrel{\text{(iv)}}{=}~&\frac{(1+\delta_1)(1+\delta_1+\frac{2}{p(k-1)})}{(1-\delta_1)(1-\delta_2)}\left(\frac{a_1\gamma\frac{(k-1/a_1)}{(k-1)} + (1-a_0)}{a_1\gamma\frac{(k-1/a_1)}{(k-1)}}\right)^2(a_1k-1)\\
    \stackrel{\text{(v)}}{\le}~& \frac{(1+\delta_1)(1+\delta_1+\frac{2}{p(k-1)})}{(1-\delta_1)(1-\delta_2)}\left(\frac{a_1\gamma\frac{(k-2)}{(k-1)} + (1-a_0)}{a_1\gamma\frac{(k-2)}{(k-1)}}\right)^2(a_1k-1),
\end{align*}
where (i) follows from Lemma~\ref{lem:weighted_degree}, (ii) follows from (\ref{eq:x_j_upper_bound}) and (\ref{eq:x_s_upper_bound}), (iii) and (v) uses $a_1 \ge 1/2$, and (iv) follows from the definition $\gamma = \frac{p(k-1)}{q(n-k)}$. This implies that the total amount of source mass is
\[
    \theta^\dagger < \frac{(1+\delta_1)(1+\delta_1+\frac{2}{p(k-1)})}{(1-\delta_1)(1-\delta_2)}\left(\frac{a_1\gamma\frac{(k-2)}{(k-1)} + (1-a_0)}{a_1\gamma\frac{(k-2)}{(k-1)}}\right)^2a_1k = \theta^\dagger
\]
which is a contradiction. Therefore we must have $i \not\in K \cap \tilde{Y}_1$, but then this contradicts our assumption that $i \in K \cap \tilde{Y}_1$. Since our choice of $i,s \in K_1$ were arbitrary, this means that $x^\dagger_i > 0$ for all $i \in K_1$ and for all $s \in K_1$.

Finally, the upper bound on $|\supp(x^\dagger)\cap K^\setc|$ follows directly from the fact that $x^\dagger_i > 0$ only if node $i$ settles 1 unit mass. 
\end{proof}

By Proposition~\ref{prop:fp_lower_bound}, the F1 score for $\supp(x^*)$ is at most
\[
    \mbox{F1}(\supp(x^*)) < \frac{2k}{2k+(1-\delta_3)p(k-1)/\gamma}.
\]
By Proposition~\ref{prop:fp_upper_bound}, the F1 score for $\supp(x^\dagger)$ is at least
\[
    \mbox{F1}(\supp(x^\dagger)) \ge \frac{2k}{2k + \left(\left(\dfrac{a_1\gamma\frac{(k-2)}{(k-1)} + (1-a_0)}{a_1\gamma\frac{(k-2)}{(k-1)}}\right)^2r-1\right)a_1k + (1-a_1)k}.
\]
Therefore, a sufficient condition for $\mbox{F1}(\supp(x^\dagger)) \ge \mbox{F1}(\supp(x^*))$ is
\begin{align*}
    &\left(\left(\dfrac{a_1\gamma\frac{(k-2)}{(k-1)} + (1-a_0)}{a_1\gamma\frac{(k-2)}{(k-1)}}\right)^2r-1\right)a_1 + (1-a_1) \le (1-\delta_3)\frac{p}{\gamma}\frac{(k-1)}{k}\\
    \iff~& \left(\dfrac{a_1\gamma\frac{(k-2)}{(k-1)} + (1-a_0)}{\gamma\frac{(k-2)}{(k-1)}}\right)^2 \le \frac{a_1}{r}\left((1-\delta_3)\frac{p}{\gamma}\frac{(k-1)}{k} + 2a_1 - 1\right)\\
    \iff~&a_1\gamma\frac{(k-2)}{(k-1)} + 1 - a_0 \le \gamma\frac{(k-2)}{(k-1)}\sqrt{\frac{a_1}{r}\left((1-\delta_3)\frac{p}{\gamma}\frac{(k-1)}{k} + 2a_1 - 1\right)} \\
    &\hspace{31mm}= \gamma\frac{(k-2)}{(k-1)}\sqrt{\left(\frac{p/\gamma}{r'}+\frac{2a_1-1}{r}\right)a_1}\\
    \iff~& a_0 \ge 1 - \frac{(k-2)}{(k-1)}\left(\sqrt{\left(\frac{p/\gamma}{r'}+\frac{2a_1-1}{r}\right)a_1} - a_1\right)\gamma.
\end{align*}
Finally, setting $K'=K\cap\tilde{Y_1}$ completes the proof of Theorem~\ref{thm:formal}.

\subsection{Concentration results}

\begin{lemma}[External degree in $G$]\label{lem:external_degree}
If $q \ge \frac{(3+\epsilon)}{\delta^2}\frac{\log k}{n-k}$ for some $\epsilon > 0$ and $0 < \delta \le 1$, then with probability at least $1-k^{-\epsilon/3}$ we have that for all $i \in K$,
\[
  |E(\{i\},K^\setc)| \ge (1-\delta)q(n-k).
\]
\end{lemma}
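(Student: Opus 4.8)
The plan is to fix a single node $i \in K$, recognize its external degree as a sum of independent Bernoulli random variables, apply a lower-tail Chernoff bound, and finish with a union bound over all $k$ nodes of $K$.

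First I would fix $i \in K$ and write $|E(\{i\},K^\setc)| = \sum_{j \in K^\setc} \mathbf{1}[(i,j)\in E]$. By Definition~\ref{def:graph_model}, each edge $(i,j)$ with $i \in K$ and $j \in K^\setc$ is present independently with probability $q$, so this quantity is a sum of $|K^\setc| = n-k$ independent $\Ber(q)$ indicators with mean $\mu := q(n-k)$. The arbitrary, possibly dependent, edges internal to $K^\setc$ play no role, since we are only counting edges incident to $i$, and these are governed by the $q$-probability rule of the model.

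Next I would invoke the multiplicative Chernoff bound in its lower-tail form, $\Pr[X \le (1-\delta)\mu] \le \exp(-\delta^2\mu/2)$ valid for $0 < \delta \le 1$. Substituting the hypothesis $\mu = q(n-k) \ge \frac{(3+\epsilon)}{\delta^2}\log k$ yields a per-node failure probability of at most $\exp(-(3+\epsilon)(\log k)/2) = k^{-(3+\epsilon)/2}$.

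Finally I would take a union bound over the $k$ choices of $i \in K$, giving an overall failure probability of at most $k \cdot k^{-(3+\epsilon)/2} = k^{-(1+\epsilon)/2}$. Since $(1+\epsilon)/2 \ge \epsilon/3$ for every $\epsilon > 0$, this is at most $k^{-\epsilon/3}$, which is exactly the claimed bound. There is no genuine obstacle in this argument; the only point requiring attention is the bookkeeping of constants, namely checking that the threshold imposed on $q$ makes the per-node tail small enough that, even after absorbing the factor of $k$ from the union bound, the total failure probability remains below the target $k^{-\epsilon/3}$ (in fact it is comfortably smaller).
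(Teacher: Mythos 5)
Your proposal is correct and follows essentially the same route as the paper: recognize $|E(\{i\},K^\setc)|$ as a sum of $n-k$ independent $\Ber(q)$ indicators, apply the lower-tail multiplicative Chernoff bound, and union-bound over the $k$ nodes of $K$; your constant bookkeeping (per-node failure $k^{-(3+\epsilon)/2}$, total $k^{-(1+\epsilon)/2}\le k^{-\epsilon/3}$) checks out. The only cosmetic difference is that you use the sharper $\exp(-\delta^2\mu/2)$ tail where the paper's constant $3+\epsilon$ is tuned to the $\exp(-\delta^2\mu/3)$ form, so your bound is in fact slightly stronger than needed.
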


\begin{proof}
This follows directly by noting that, for each $i\in K$, $|E(\{i\},K^\setc)|$ is the sum of independent Bernoulli random variables with mean $q(n-k)$. Applying a multiplicative Chernoff bound on $|E(\{i\},K^\setc)|$ and then a union bound over $i \in K$ gives the result.
\end{proof}

\begin{lemma}[Node degree in $G^w$]\label{lem:weighted_degree}
If $p \ge \frac{(6+\epsilon)}{\delta^2}\frac{\log k}{k-2}$ for some $\epsilon>0$ and $0 < \delta \le 1$, then with probability at least $1-k^{-\epsilon/6}$ we have that for all $i \in K\cap \tilde{Y}_1$,
\[
  \deg_{G^w}(i) \le (1+\delta)(p(a_1k-1)+(1-a_0)q(n-k)).
\]
\end{lemma}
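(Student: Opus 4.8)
The plan is to observe that, once we set $\epsilon = 0$, the weighted degree of a node $i \in K \cap \tilde{Y}_1$ collapses to a plain count of its label-$1$ neighbors, which is a sum of independent Bernoulli variables whose mean is exactly the quantity inside the bound; the lemma then follows from a single multiplicative Chernoff estimate together with a union bound, in the same spirit as Lemma~\ref{lem:external_degree} (here we control the upper tail rather than the lower one).

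First I would unfold the definition of the weighted degree. Since $i$ carries label $1$ and $\epsilon = 0$, every edge from $i$ to a label-$0$ node has weight $0$, while every edge to a label-$1$ node has weight $1$, so
\[
  \deg_{G^w}(i) = |\{\, j \sim i : j \in \tilde{Y}_1 \,\}|.
\]
I would then split the potential label-$1$ neighbors of $i$ into those inside $K$ and those outside. There are $|K \cap \tilde{Y}_1| - 1 = a_1 k - 1$ candidates inside $K$ (excluding $i$ itself), each joined to $i$ independently with probability $p$ by Definition~\ref{def:graph_model}, and $|K^\setc \cap \tilde{Y}_1| = (1-a_0)(n-k)$ candidates outside $K$, each joined to $i$ independently with probability $q$. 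Hence $\deg_{G^w}(i)$ is a sum of independent $\Ber(p)$ and $\Ber(q)$ variables with mean
\[
  \mu := p(a_1 k - 1) + (1-a_0)q(n-k),
\]
which is precisely the expression appearing in the statement. One may condition on the label assignment throughout: the labels only fix which candidates are counted, and the sole randomness is in the edges incident to $i \in K$, which the random model guarantees to be mutually independent.

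Next I would apply the multiplicative Chernoff bound in the regime $0 < \delta \le 1$, giving $\Pr[\deg_{G^w}(i) \ge (1+\delta)\mu] \le \exp(-\delta^2 \mu / 3)$. To convert the hypothesis on $p$ into the claimed failure probability I would lower-bound the mean using only its internal part together with Assumption~\ref{assum:label_acc}: since $a_1 \ge 1/2$,
\[
  \mu \ge p(a_1 k - 1) \ge p\,\tfrac{k-2}{2}.
\]
Substituting $p \ge \frac{(6+\epsilon)}{\delta^2}\frac{\log k}{k-2}$ yields $\delta^2 \mu / 3 \ge (1 + \epsilon/6)\log k$, so the per-node upper-tail probability is at most $k^{-(1+\epsilon/6)}$. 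A union bound over the at most $k$ nodes in $K \cap \tilde{Y}_1$ then gives an overall failure probability of at most $k \cdot k^{-(1+\epsilon/6)} = k^{-\epsilon/6}$, as required.

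The argument is essentially routine, so the only real obstacle is the bookkeeping of constants: I must ensure that the $a_1 \ge 1/2$ step produces exactly the factor $(k-2)$ appearing in the denominator of the $p$-hypothesis, so that the exponent lands on $(1+\epsilon/6)\log k$ and the union bound absorbs one factor of $k$ cleanly. A secondary point to state carefully is the independence claim—that decomposing the weighted degree into an internal $\Ber(p)$ sum and an external $\Ber(q)$ sum is legitimate because, for $i \in K$, the random model makes all edges incident to $i$ independent, while the label sets merely select which of those edges are weighted $1$.
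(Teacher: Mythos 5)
Your proposal is correct and follows essentially the same route as the paper: identify $\deg_{G^w}(i)$ (with cross-label weight set to $0$) as a sum of independent Bernoulli variables with mean $p(a_1k-1)+(1-a_0)q(n-k)$, lower-bound that mean by its internal part via $a_1\ge 1/2$ so that the hypothesis on $p$ gives $\delta^2\mu/3 \ge (1+\epsilon/6)\log k$, and finish with a multiplicative Chernoff bound plus a union bound over $K\cap\tilde{Y}_1$. Your write-up is slightly more explicit about the internal/external decomposition and the conditioning on labels, but the argument and the constants are the same.
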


\begin{proof}
For each node $i \in K \cap \tilde{Y}_1$, since $K \cap \tilde{Y}_1=a_1k$ and $K^\setc \cap \tilde{Y}_1=(1-a_0)(n-k)$, its degree in $G^w$, that is $\deg_{G^w}(i)$, is the sum of independent Bernoulli random variables with mean $\mathbb{E}(\deg_{G^w}(i)) = p(a_1k-1)+(1-a_0)q(n-k) \ge p(a_1k-1) \ge \frac{(3+\epsilon/2)}{\delta^2}\log k$. Apply the Chernoff bound we get
\[
  \mathbb{P}\left(\deg_{G^w}(i)\ge (1+\delta)\mathbb{E}(\deg_{G^w}(i))\right) \le \exp(-\delta^2\mathbb{E}(\deg_{G^w}(i))/3) \le \exp(-(1+\epsilon/6)\log k).
\]
Taking a union bound over all $i \in K\cap\tilde{Y}_1$ gives the result.
\end{proof}

\begin{lemma}[Internal connectivity in $G^w$]\label{lem:connectivity}
If $p\ge\max(\frac{(6+\epsilon_1)}{\delta_1^2}\frac{\log k}{k-2}, \frac{(\sqrt{8}+\epsilon_2)}{\delta_2\sqrt{1-\delta_1}}\frac{\sqrt{\log k}}{\sqrt{k-2}})$, then with probability at least $1-2k^{-\epsilon_1/6}-k^{-\epsilon_2}$, we have that for all $i,j \in K \cap \tilde{Y}_1$ where $i\neq j$, there are at least $(1-\delta_1)(1-\delta_2)p^2(a_1k-1)$ distinct paths connecting node $i$ to node $j$ such that, each of these paths consists of at most 2 edges, and each edge from $G^w$ appears in at most one of these paths.
\end{lemma}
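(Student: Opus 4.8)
The plan is to reduce the statement about edge-disjoint short paths to a pure counting statement about common neighbors, and then to prove that count concentrates by a two-stage Chernoff argument that matches the two hypotheses on $p$.

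First I would observe that, since $\epsilon=0$ and every node of $K\cap\tilde{Y}_1$ carries the same label, the subgraph of $G^w$ induced on $K\cap\tilde{Y}_1$ coincides with the subgraph of $G$ induced on $K\cap\tilde{Y}_1$; by Definition~\ref{def:graph_model} this is an Erd\H{o}s--R\'enyi graph on $a_1 k$ vertices with edge probability $p$, because each intra-$K$ pair is present independently with probability $p$. Next, for any common neighbor $\ell\in K\cap\tilde{Y}_1$ of $i$ and $j$, the path $i-\ell-j$ has length $2$ and uses the edges $(i,\ell)$ and $(\ell,j)$; distinct choices of $\ell$ use disjoint edge pairs, and none of them equals the edge $(i,j)$ since $\ell\notin\{i,j\}$. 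Hence distinct common neighbors already yield pairwise edge-disjoint paths of length at most $2$, and it suffices to show that, with the stated probability, every pair $i\neq j$ in $K\cap\tilde{Y}_1$ has at least $(1-\delta_1)(1-\delta_2)p^2(a_1 k-1)$ common neighbors \emph{inside} $K\cap\tilde{Y}_1$. Common neighbors lying outside $K\cap\tilde{Y}_1$ are simply discarded, which only helps.

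I would prove the counting bound in two stages, each controlled by one of the two lower bounds on $p$. In the first stage I fix an endpoint, say $i$, and lower bound the number of its neighbors inside $K\cap\tilde{Y}_1$. This degree is a sum of $a_1 k-1$ independent $\Ber(p)$ variables with mean $p(a_1 k-1)\ge \tfrac12 p(k-2)$, using $a_1\ge 1/2$ from Assumption~\ref{assum:label_acc}; the first hypothesis $p\ge\frac{6+\epsilon_1}{\delta_1^2}\frac{\log k}{k-2}$ makes this mean at least $\frac{3+\epsilon_1/2}{\delta_1^2}\log k$, so a lower-tail Chernoff bound together with a union bound over the at most $k$ nodes (the same computation as in Lemma~\ref{lem:weighted_degree}) shows that, outside an event of probability $O(k^{-\epsilon_1/6})$, every node has at least $(1-\delta_1)p(a_1 k-1)$ neighbors in $K\cap\tilde{Y}_1$. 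In the second stage I condition on the neighborhood $D_i$ of $i$ produced above. The key point is that the edges joining the vertices of $D_i$ to $j$ are independent of the edges that determine $D_i$, so the number of common neighbors is a binomial with $|D_i\setminus\{j\}|\ge(1-\delta_1)p(a_1 k-1)-1$ trials and success probability $p$, hence conditional mean at least $(1-\delta_1)p^2(a_1 k-1)$ up to a lower-order term. A second lower-tail Chernoff bound with deviation $\delta_2$ gives failure probability $\exp\!\left(-\tfrac12\delta_2^2(1-\delta_1)p^2(a_1 k-1)\right)$; the second hypothesis on $p$, whose factor $\sqrt{1-\delta_1}$ in the denominator is designed precisely to cancel the $(1-\delta_1)$ in this mean, makes this at most $k^{-(2+c)}$ for some $c>0$, which survives a union bound over the at most $\binom{a_1 k}{2}\le k^2/2$ pairs and contributes the $k^{-\epsilon_2}$ term.

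Combining the two stages by a union bound produces a bound of the stated form $1-2k^{-\epsilon_1/6}-k^{-\epsilon_2}$, where the $\epsilon_1$-terms track the degree-concentration events and the $\epsilon_2$-term tracks the common-neighbor concentration over all pairs. I expect the main obstacle to be the second stage: one must argue the conditional independence cleanly so that, after fixing $D_i$, the counts toward $j$ form a genuine binomial, and one must verify that the per-pair Chernoff exponent --- which is only of order $p^2 k\gtrsim\log k$ and is further halved by the $a_1\ge 1/2$ substitution --- still beats the $\Theta(k^2)$ union bound over all pairs. This balance is exactly what forces the constant $\sqrt{8}$ and the $\sqrt{1-\delta_1}$ factor in the second hypothesis on $p$; by contrast, the reduction to common neighbors and the edge-disjointness of the resulting length-$2$ paths are essentially immediate.
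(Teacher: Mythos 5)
Your proposal is correct and follows essentially the same route as the paper's proof: a first Chernoff bound (plus union bound) showing every node of $K\cap\tilde{Y}_1$ has at least $(1-\delta_1)p(a_1k-1)$ neighbors in $K\cap\tilde{Y}_1$, then a conditional Chernoff bound on the edges from $j$ into that neighborhood of $i$ (i.e.\ the common-neighbor count), with a union bound over all pairs absorbing the $k^2$ factor via the second hypothesis on $p$. The only cosmetic difference is the case $j$ adjacent to $i$, where the paper recovers the one lost trial by counting the edge $(i,j)$ itself as a length-1 path, whereas you absorb it as a lower-order term.
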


\begin{proof}
Let $F_i$ denote the set of neighbors of a node $i$ in $K\cap\tilde{Y}_1$. By our assumption that $p\ge\frac{(6+\epsilon_1)}{\delta_1^2}\frac{\log k}{k-2}$, we may take a Chernoff bound on the size of $F_i$ and a union bound over all $i \in K\cap\tilde{Y}_1$ to get that, with probability at least $1-2k^{-\epsilon_1/6}$,
\[
  (1-\delta_1)p(a_1k-1) \le |F_i| \le (1+\delta_1)p(a_1k-1), \; \forall i \in K\cap\tilde{Y}_1.
\]
If $j \not\in F_i$, then since $|E(\{j\},F_i)|$ is a sum of independent Bernoulli random variables with mean $|F_i|p$, we may apply the Chernoff bound and get that, with probability at least $1-2k^{-\epsilon_1/6}$ (under the event that $(1-\delta_1)p(a_1k-1) \le |F_i| \le (1+\delta_1)p(a_1k-1)$),
\begin{align}
  \mathbb{P}(|E(\{j\},F_i)| 
  \le (1-\delta_2)|F_i|p) 
  \le \exp(-\delta_2^2|F_i|p/2) 
  &\le \exp(-\delta_2^2(1-\delta_1)p^2(a_1k-1)/2))\nonumber\\
  &\le \exp(-(2+\epsilon_2)\log k).\label{eq:connectivity_proof_eq1}
\end{align}
The last inequality in the above follows from our assumption that $p\ge\frac{(\sqrt{8}+\epsilon_2)}{\delta_2\sqrt{1-\delta_1}}\frac{\sqrt{\log k}}{\sqrt{k-2}}$. If $j \in F_i$, then the edge $(i,j)$ is a path of length 1 connecting $j$ to $i$, and moreover, let $\ell \in K \cap \tilde{Y}_1$ be such that $\ell\not\in F_i$ and $\ell \neq i$, we have that
\begin{align*}
  \mathbb{P}(|E(\{j\},F_i\backslash\{j\})| + 1 \le (1-\delta_2)|F_i|p) 
  &\le \mathbb{P}(|E(\{\ell\},F_i)| \le (1-\delta_2)|F_i|p) \\
  &\le \exp(-(2+\epsilon_2)\log k),
\end{align*}
where the last inequality follows from (\ref{eq:connectivity_proof_eq1}). Note that, for a node $j$ in $K\cap\tilde{Y}_1$ such that $j\neq i$, each edge $(j,\ell) \in E(\{j\},F_i\backslash\{j\})$ identifies a unique path $(j,\ell,i)$ and none of these paths has overlapping edges. Therefore, denote $P(i,j)$ the set of mutually non-overlapping paths of length at most 2 between $i$ and $j$, and take union bound over all $i,j \in K\cap\tilde{Y}_1$, we get that
\[
  \mathbb{P}(\exists i,j \in K\cap\tilde{Y}_1, i\neq j, \mbox{s.t.}\;P(i,j) \le (1-\delta_2)|F_i|p) \le k^{-\epsilon_2}.
\]
Finally, taking a uninon bound over the above event and the event that there is $i\in K\cap\tilde{Y}_1$ such that $|F_i|< (1-\delta_1)p(a_1k-1)$ gives the required result.
\end{proof}
\section{Discussion: How to set edge weight $\epsilon$ in $G^w$ under the local random model (Definition~\ref{def:graph_model})}\label{sec:edge_weight_discussion}

Given a graph $G=(V,E)$ generated from the local random model described in Definition~\ref{def:graph_model}, noisy labels $\tilde{y}_i$ for node $i \in V$, recall from (\ref{eq:edge_weight}) that the edge weights in $G^w$ are such that $w((i,j)) = 1$ if $\tilde{y}_i = \tilde{y}_j$ and $w((i,j)) = \epsilon$ otherwise. Our analysis in Section~\ref{sec:main} takes $\epsilon = 0$. While understanding diffusion in $G^w$ with $\epsilon = 0$ already provides us with some insights with regard to how noisy labels can be useful, a natural extension of our analysis is to determine an ``optimial'' $\epsilon$ given model parameters $n,k,p,q$ and label accuracy $a_0,a_1$. 

To see why this is an interesting problem, consider the case when $a_1$ is low. In this case, if we set $\epsilon=0$ and start diffusing mass from a seed node $s \in K$ with $\tilde{y}_s=1$, then diffusion in $G^w$ cannot reach a node $i \in K$ such that $\tilde{y}_i=0$, because the graph $G^w$ is disconnected with two components $\tilde{Y}_1 = \{i\in V: \tilde{y}_i=1\}$ and $\tilde{Y}_0 = \{i\in V: \tilde{y}_i=0\}$. Consequently, the recall of the output cluster is at most $a_1$. On the other hand, if we instead set $\epsilon > 0$, this allows diffusion in $G^w$ to reach node $i \in K$ whose label is $\tilde{y}_i=0$, however, at the same time, diffusion in $G^w$ incurs the risk of reaching a node $i \notin K$ whose label is $\tilde{y}_i=0$. Therefore, setting $\epsilon > 0$ allows for discovering more true positives at the expense of incurring more false positives. Whether one should set $\epsilon = 0$ will depend on $n,k,p,q,a_0,a_1$ and the accuracy metric (e.g. precision, recall, or the F1) one aims to maximize. If the objective is to maximize recall, then it is easy to see that one should set $\epsilon > 0$ to allow recovering nodes in $K$ that receive different noisy labels. In general, it turns out that rigorously characterizing an ``optimal'' $\epsilon$ that maximizes other accuracy metrics such as the F1 is nontrivial. In what follows we discuss intuitively the potential conditions under which one should set $\epsilon=0$ or $\epsilon>0$ to obtain a better clustering result which balances precision and recall, i.e. attains a higher F1. In addition, we empirically demonstrate these conditions over synthetic data.

{\it \textbf{Conjecture 1:} A sufficient condition to favor $\epsilon > 0$ is $(1-a_1)pk>a_0q(n-k)$.}

{\it \textbf{Conjecture 2:} A sufficient condition to favor $\epsilon = 0$
is $(1-a_1)p^2k<a_0q^2(n-k)$.}

We provide an informal explanation for these conditions. Note that if a seed node $s$ is drawn uniformly at random from $K$, then with probability $a_1 \ge 1/2$ we get that $s \in K\cap \tilde{Y}_1$. Therefore let us assume that a seed node is selected from $K\cap \tilde{Y}_1$. In this case, since the diffusion of mass starts from within $K\cap \tilde{Y}_1$, excess mass needs to get out of $K\cap \tilde{Y}_1$ along the cut edges of $K\cap \tilde{Y}_1$. Let us focus on the edges between $K\cap \tilde{Y}_1$ and $\tilde{Y}_0$ since these are the edges affected by $\epsilon$. The edges between $K\cap \tilde{Y}_1$ and $K^\setc \cap \tilde{Y}_1$ are not affected by $\epsilon$. In expectation, every node in $K\cap \tilde{Y}_1$ has $(1-a_1)pk$ number of neighbors in $K\cap \tilde{Y}_0$ and $a_0q(n-k)$ number of neighbors in $K^\setc \cap \tilde{Y}_0$. Therefore, in a diffusion step when we push excess mass from a node in $K\cap \tilde{Y}_1$ to its neighbors, for every $(1-a_1)pk$ unit mass that is pushed into $K\cap \tilde{Y}_0$, on average $a_0q(n-k)$ unit mass is pushed into $K^\setc \cap \tilde{Y}_0$. If $(1-a_1)pk>a_0q(n-k)$, then this means that $K\cap \tilde{Y}_0$ receives more mass than $K^\setc \cap \tilde{Y}_0$. Consequently, as a result of $K\cap \tilde{Y}_0$ receiving more mass than $K^\setc \cap \tilde{Y}_0$ from a diffusion step within $K\cap \tilde{Y}_1$, we can expect that by setting $\epsilon > 0$, we obtain more true positives as the diffusion process covers nodes in $K\cap \tilde{Y}_0$ at the expense of fewer false positives as the diffusion process covers less nodes in $K^\setc \cap \tilde{Y}_0$. This leads to our first conjecture on the condition to favor $\epsilon > 0$ over $\epsilon = 0$. 

For the condition in our second conjecture, note that even when $K\cap \tilde{Y}_0$ receives less mass from $K\cap \tilde{Y}_1$ than the amount of mass that $K^\setc \cap \tilde{Y}_0$ receives from $K\cap \tilde{Y}_1$, it does not necessarily imply that we would get more number of false negatives from $K^\setc \cap \tilde{Y}_0$ and fewer number of true positives from $K\cap \tilde{Y}_0$. Recall that we use $\supp(x^*)$ as the output cluster where $x^*$ is the optimal solution of the diffusion problem~(\ref{eq:fd_dual}). For a node $i \in V$, we know that $x^*_i>0$ only if node $i$ receives more than 1 unit mass. Consider the following two average diffusion dynamics. First, as discussed before, for every $(1-a_1)pk$ unit mass that is pushed into $K\cap \tilde{Y}_0$ from the $a_1pk$ nodes in $K\cap \tilde{Y}_1$, on average (i.e. averaged over multiple nodes) $a_0q(n-k)$ unit mass is pushed into $K^\setc \cap \tilde{Y}_0$. Second, in expectation, every node in $K\cap\tilde{Y}_0$ has $a_1pk$ neighbors in $K \cap \tilde{Y}_1$ and every node in $K^\setc \cap\tilde{Y}_0$ has $a_1qk$ neighbors in $K \cap \tilde{Y}_1$. For every unit mass that moves from $K\cap \tilde{Y}_1$ to $K\cap \tilde{Y}_0$, a node in $K\cap \tilde{Y}_0$ on average (i.e. averaged over multiple nodes and multiple diffusion steps) receives $pa_1k/a_1k = p$ unit and a node in $K^\setc\cap \tilde{Y}_0$ on average receives $qa_1k/a_1k = q$ unit. Combining the above two points, we get that on average, for every $(1-a_1)p^2k$ unit mass received by a node in $K\cap \tilde{Y}_0$, a node in $K^\setc\cap \tilde{Y}_0$ receives $a_0q^2(n-k)$ unit mass. Therefore, if $(1-a_1)p^2k<a_0q^2(n-k)$, then setting $\epsilon > 0$ would make a node $i \in K^\setc\cap \tilde{Y}_0$ generally receive less mass than a node in $j \in K^\setc\cap \tilde{Y}_0$. Consequently, a node $j \in K^\setc\cap \tilde{Y}_0$ is more likely to receive more than 1 unit mass. This implies that, by setting $\epsilon > 0$ we would get fewer number of true positives from $K\cap \tilde{Y}_0$ at the expense of incurring more number of false positives from $K^\setc\cap \tilde{Y}_0$. This leads to our second conjecture.

Of course, a rigorous argument to justify both conjectures will require a much more careful analysis of the diffusion dynamics and additional assumptions on $p,q$ so that the average behaviors described in the above hold with high probability. In addition, there is a gap of order $p/q$ between the two conditions. It is also an interesting question to determine a good strategy to set $\epsilon$ in that ``gap regime''. Addressing these questions are nontrivial and we leave it for future work.

\subsection{Empirical demonstration of our conjectures}

We demonstrate our conjectures on when to set $\epsilon = 0$ or $\epsilon > 0$ over synthetic data. As in Section~\ref{sec:synthetic_experiments}, we generate synthetic graphs using the stochastic block model with cluster size $k=500$ an number of clusters $c=20$. The number of nodes in the graph equals $n=kc=10,000$. Two nodes within the same cluster are connected with probability $p$ and two nodes from different clusters are connected with probability $q$. We consider different choices for $q,a_0,a1$ such that the condition in either Conjecture 1 or Conjecture 2 is satisfied. Other empirical settings are the same as in Section~\ref{sec:synthetic_experiments}. 

In Table~\ref{tab:edge_weight_conjecture} we report the F1 scores obtained by setting $\epsilon=0$ and $\epsilon=0.2$, respectively. We average over 100 trials for each setting. For comparison purposes we also include the results obtained by employing Flow Diffusion (FD) over the original graph. Note that FD is equivalent to setting $\epsilon=1$. Observe that, when $(1-a_1)pk>a_0q(n-k)$ as required by Conjecture 1, setting $\epsilon=0.2$ leads to a higher F1, whereas when $(1-a_1)p^2k<a_0q^2(n-k)$ as required by Conjecture 2, setting $\epsilon=0$ leads to a higher F1. This demonstrates both conjectures.

\begin{table}[ht!]
\caption{Empirical demonstration of our conjectures: F1 scores for local clustering in the local random graph model with different model parameters and label accuracy}
\label{tab:edge_weight_conjecture}
  \centering
  \begin{tabular}{llccc}
    \toprule
    & Empirical Setting & FD & LFD~($\epsilon$=0) & LFD~($\epsilon$=0.2)\\
    \midrule
    \multirow{2}{*}{Conjecture 1} 
    & $p$=0.05, $q$=0.0015, $a_0$=0.7, $a_1$=0.6 & 69.2 & 64.5 & \bf 77.8 \\
    & $p$=0.05, $q$=0.0015, $a_0$=0.6, $a_1$=0.65 & 69.2 & 64.2 & \bf 74.6 \\
    \midrule
    \multirow{2}{*}{Conjecture 2} 
    & $p$=0.05, $q$=0.0075, $a_0$=0.8, $a_1$=0.7  & 9.7 & \bf 48.8 & 37.3 \\
    & $p$=0.05, $q$=0.0075, $a_0$=0.9, $a_1$=0.9 & 9.7 & \bf 76.7 & 61.1  \\
    \bottomrule
  \end{tabular}
\end{table}

From a practical point of view, we would like to remark that real networks often have much more complex structures than the synthetic graphs. Therefore the same conditions may not generalize to the real networks that one would work with in practice. To that end, in Section~\ref{sec:hyperparameter} we provide an empirical study on the robustness of our method with respect to different values of $\epsilon$. Our empirical results in Section~\ref{sec:hyperparameter} indicate that, over real networks, the local clustering accuracy remains similar for different choices of $\epsilon$, ranging from 0.01 to 0.2.
\section{Further evaluations and comparisons}
\label{sec:appx_comparisons}
\subsection{Hyperparameter analysis}\label{sec:hyperparameter}
In this section, we test the robustness of our method against various choices of hyperparameters. There are 2 hyperparameters in our method. The first hyperparameter is the edge weight $\epsilon \in [0,1)$ from (\ref{eq:edge_weight}), and the second hyperparameter is the total amount of source mass $\theta>0$ at the seed node(s) to initialize the flow diffusion process.

We conduct a detailed case study using the Coauthor CS dataset. Similar trends and results are seen in the experiments using other datasets. We focus on the one of the empirical settings considered in Section~\ref{sec:real_world_exp}, where we are given 10 positive and 10 negative ground-truth node labels. Apart from the choices for $\epsilon$ and $\theta$, we keep all other empirical settings the same as in Section~\ref{sec:real_world_exp}. We report the average local clustering result over 100 trails.

We vary the total amount of source mass $\theta$ as follows. Set $\theta = \alpha \vol_G(K)$ and we let $\alpha \in \{2,3,4,5\}$. Recall that $K$ denotes the target cluster. Therefore picking $\alpha$ in the range of $[2,5]$ results in very large variations in $\theta$. The experiments in Section~\ref{sec:real_world_exp} use $\alpha=2$ and $\epsilon=0.05$. Here, we present results using different combinations of values for $\alpha$ and $\epsilon$. Observe that for a fixed $\alpha\in\{2,3,4\}$, the maximum change in the F1 score across $\epsilon\in[10^{-2},10^{-1}]$ is 1.2. Moreover, for all combinations of $\epsilon$ and $\alpha$, our method has a much higher F1, highlighting the effectivess and robustness to incorporate noisy labels for local clustering.

\begin{table}[ht!]
    \centering
    \small
    \caption{F1 scores for different values of source mass and inter-edge weight}
    \label{tab:hyper_analysis}
    \begin{tabular}{lcccccccc}
      \toprule
      \multirow{2}{*}{} & \multicolumn{2}{c}{} & \multicolumn{6}{c}{LFD} \\
       \cmidrule(lr){4-9}
      $\alpha$& FD & WFD & \multicolumn{1}{c}{$\epsilon = 0.01$} & \multicolumn{1}{c}{$\epsilon = 0.025$} & \multicolumn{1}{c}{$\epsilon = 0.05$} & \multicolumn{1}{c}{$\epsilon = 0.075$} & \multicolumn{1}{c}{$\epsilon = 0.1$} & \multicolumn{1}{c}{$\epsilon = 0.2$} \\
      \midrule
2 & 62.8 & 56.4 & 73.0 & \textbf{73.1} & 72.6 & 72.4 & 72.2 & 70.8 \\
3 & 67.5 & 58.3 & \textbf{74.8} & 74.1 & 74.1 & 74.0 & 74.0 & 73.0 \\
4 & 68.1 & 57.6 & \textbf{73.4} & 72.7 & 72.1 & 72.3 & 72.2 & 72.0 \\
5 & 66.1 & 55.4 & \textbf{71.8} & 70.8 & 70.0 & 69.5 & 69.3 & 68.7\\
      \bottomrule
    \end{tabular}
\end{table}

\subsection{Runtime analysis}\label{sec:runtime}
We report the running time of our Label-based Flow Diffusion (LFD) along with other flow diffusion-based local methods. The experiments are run on an Intel i9-13900K CPU with 36MB Cache and 2 x 48GB DDR5 RAM. We highlight the fast running time of LFD. Fast running times are typically seen in local methods and are due to the fact that these methods do not require processing the entire graph. The runtimes reported in Table~\ref{tab:runtime_local} are based on the experiments using the Coauthor CS dataset, averaged over 10 trials across 15 clusters. 

\begin{table}[ht!]
    \centering
    \small
    \caption{Average runtimes for different local diffusion methods}
    \label{tab:runtime_local}
    \begin{tabular}{lccc}
      \toprule
        & FD\cite{FWY20} & WFD~\cite{YF2023WFD} & LFD (ours)\\
      \midrule
      Train model & - & - & $0.09 \pm 0.01$ s\\
      Calculate weights & - & $1.11 \pm 0.77$ s & $0.03 \pm 0.02$ s\\      
      Diffusion process & $0.01 \pm 0.01$ s & $0.01 \pm 0.01$ s & $0.01 \pm 0.01$ s\\
      \midrule
      TOTAL & $0.01 \pm 0.01$ s & $1.12 \pm 0.78$ s & $0.13 \pm 0.03$ s\\
      \bottomrule
    \end{tabular}
\end{table}

\subsection{Comparison with Graph Convolutional Networks (GCNs)}

Within the task of clustering or node classification in the presence of ground-truth node labels, it is natural to extend the comparison to other types of methods which exploit both the graph structure and node information. To that end, we provide an empirical comparison with the performance of GCNs in our problem setting. Note that both the training and the inference stages of GCNs require accessing every node in the graph, and this makes GCNs (and more generally other global methods that require full graph processing) unsuitable for local graph clustering. In contrast, local methods only explore a small portion of the graph around the seed node(s).

To highlight the strengths of our local method against global methods such as GCNs, we carried out additional experiments to compare both runtime and accuracy. Again, we fix the same empirical setting as before, that is, we use the Coauthor CS dataset and select 10 nodes each from positive and negative ground-truth categories. Let us remind the reader that, here, a positive ground-truth label means that a node selected from the target cluster $K$ is given a label 1. Similarly, a negative ground-truth label means that a node selected from the rest of the graph is given a label 0.

\begin{table}[ht!]
    \centering
    \small
    \caption{Comparison between Label-based Flow Diffusion (LFD) and Graph Convolutional Network (GCN)}
    \label{tab:comparison_global}
    \begin{tabular}{ccc}
      \toprule
        & LFD & GCN\\
      \midrule
      F1-score & 73.0 & 46.9\\
      Runtime & $0.13 \pm 0.03$ s & $3.68 \pm 0.31$ s\\
      \bottomrule
    \end{tabular}
\end{table}

We use a two-layer GCN architecture with a hidden layer size of 16. When training the GCN model, we terminate the training process after 100 epochs.
In our approach, each class is treated separately in a one-vs-all classification framework during training. We replicate this procedure for each class across 10 independent trials.
The results are shown in Table~\ref{tab:comparison_global}. Observe that our method not only runs substantially faster (i.e. 28 times faster) than a GCN but also obtains a much higher F1. The poor performance of GCNs is due to the scarcity of ground-truth data in our setting, where we only have 20 samples. GCNs generally require a much greater number of ground-truth labels to work well.
In order to make GCN achieve a better accuracy than LFD, we had to increase the number of ground-truth labels to 600 samples, which is not very realistic for local clustering contexts.
\section{Experiments}
\label{sec:appx_experiments}
\subsection{Real-world dataset description}
\begin{itemize}
    \item Coauthor CS is a co-authorship graph based on the Microsoft Academic Graph from the KDD Cup 2016 challenge (\cite{SMBG18}). Each node in the graph represents an author, while an edge represents the co-authorship of a paper between two authors.
    The ground-truth node labels are determined by the most active research field of each author.
    The Coauthor CS graph consists of 18,333 computer science authors with 81,894 connections and 15 ground-truth clusters.
    \item Coauthor Physics is a co-authorship graph also extracted from the Microsoft Academic Graph and used in the KDD Cup 2016 challenge (\cite{SMBG18}).
    Its structure is similar to Coauthor CS with a focus on Physics research. 
    The dataset has 34,493 physics researchers and 247,962 connections among them. Each physics researcher belongs to one of the 5 ground-truth clusters.
    \item Amazon Photo is a co-purchasing graph from Amazon (\cite{mcauley2015image}), where nodes represent products and an edge indicates whether two products are frequently bought together.
    Labels of the nodes are determined by the product's category, while node attributes are bag-of-word encodings of product reviews. 
    The dataset consists of 7,487 photographic equipment products, 119,043 co-purchasing connections, and 8 categories
    \item Amazon Computers is another co-purchasing graph extracted from Amazon (\cite{SMBG18}), with the same structure as Amazon Photo. It has 13,752 computer equipment products, 245,861 connections, and 10 categories.
    
    \item Cora (\cite{mccallum2000automating}) is a citation network where each node denotes a scientific publication in Computer Science. An edge from node A to B indicates a citation from work A to work B. Despite their directed nature, we utilize an undirected version of these graphs for our analysis. The graph includes 2,708 publications, 5,429 edges, and 7 classes denoting the paper categories. 
    The node features are bag-of-words encodings of the paper abstract.

    \item Pubmed (\cite{sen2008collective}) is a citation network with a similar structure as Cora. We also adopt an undirected version of the graph.
    The dataset categorizes medical publications into one of 3 classes and comprises 19,717 nodes and 44,338 edges. Node features are TF/IDF encodings from a selected dictionary.
    
\end{itemize}
\subsection{Beyond flow diffusion: empirical validations with PageRank}\label{sec:pagerank}
In this section, we extend the comparisons beyond just flow diffusion, considering another local graph clustering technique, namely PageRank. We employ the $\ell_1$-regularized PageRank \cite{FKSCM2017}, demonstrating that the outcomes align consistently with those of flow diffusion. In the next sections, findings are reported for both Flow Diffusion (FD) and PageRank (PR).

\textbf{Synthetic experiments}
\begin{figure}[h!]
    \centering
    \begin{subfigure}{0.29\textwidth}
        \centering
        \includegraphics[width=\textwidth]{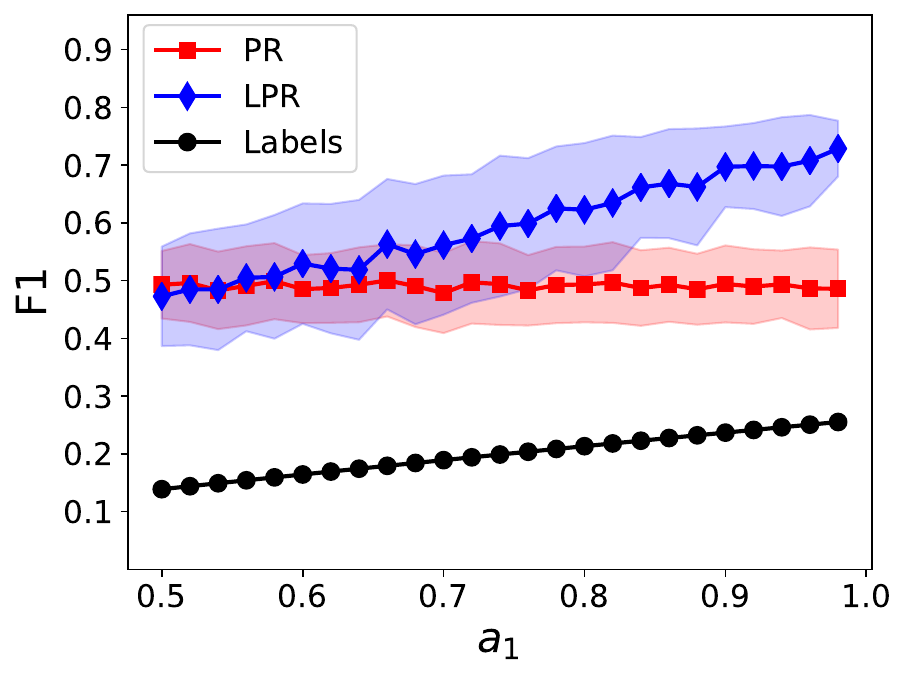}
    \end{subfigure}%
    \begin{subfigure}{0.29\textwidth}
        \centering
        \includegraphics[width=\textwidth]{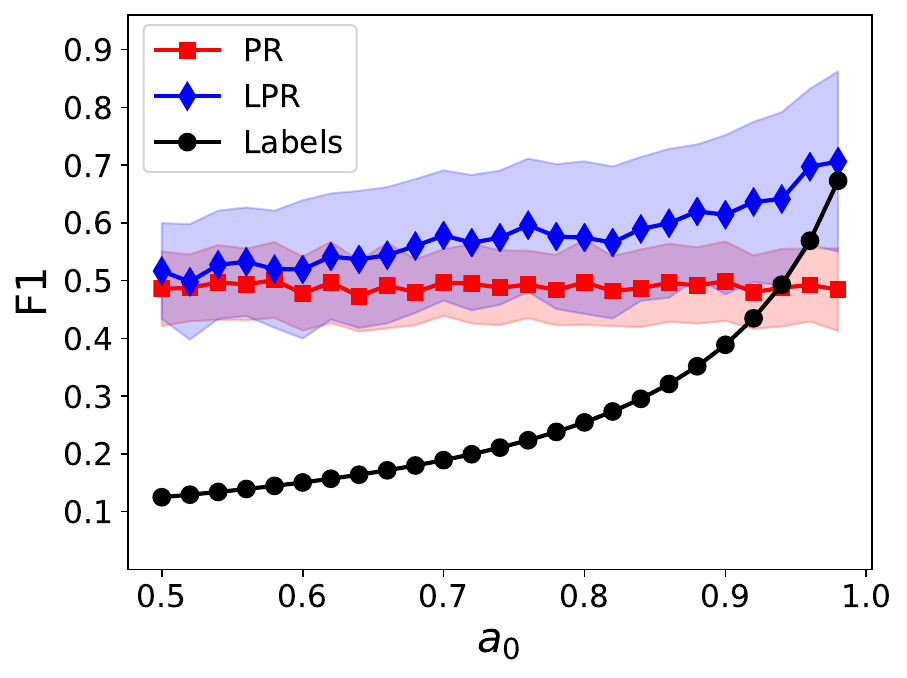}
    \end{subfigure}%
    \begin{subfigure}{0.29\textwidth}
        \centering
        \includegraphics[width=\textwidth]{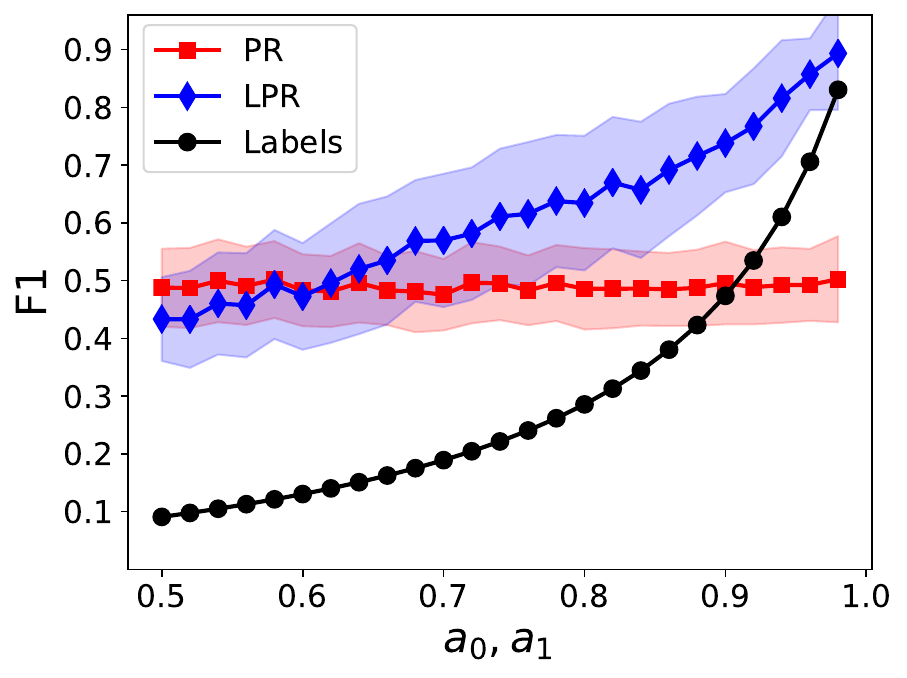}
    \end{subfigure}
    \caption{
    F1 scores obtained by employing $\ell_1$-regularized PageRank (\cite{FKSCM2017}) over the original graph (PR) and the label-weighted graph (LPR). For comparison, we also plot the F1 obtained by the noisy labels (Labels). The solid line and error bar show mean and standard deviation over 100 trials, respectively.}
    \label{fig:synthetic_pr}
\end{figure}

\textbf{Real-world experiments: Label-based PageRank with ground-truth data}
\begin{figure}[h!]
    \centering
   \includegraphics[width=0.92\textwidth]{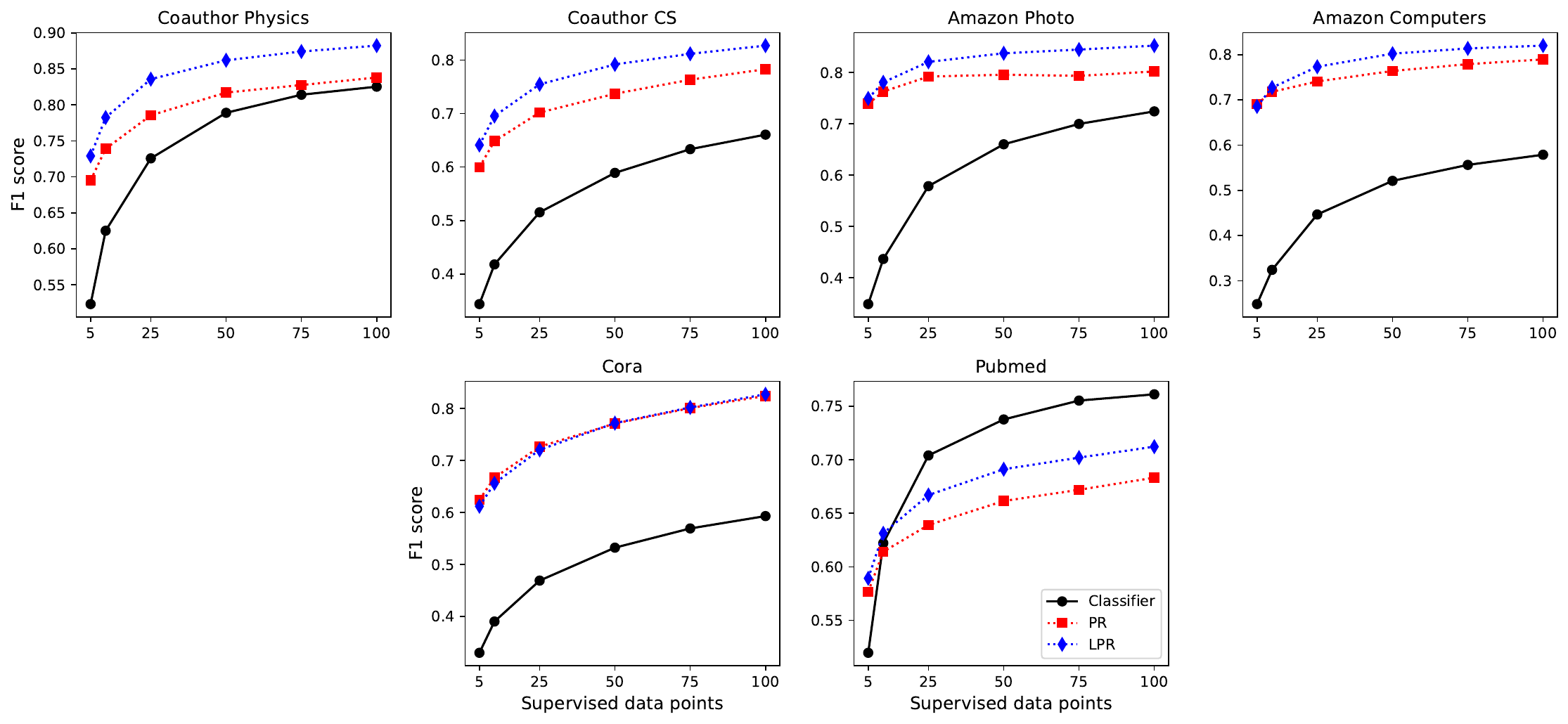}
    \caption{
    F1 scores across datasets for $\ell_1$-regularized PageRank, Label-based PageRank (LPR), and Logistic Regression (Classifier) with an increasing number of positive and negative ground truth samples}
    \label{fig:realworld_sup_pr}
    \vskip -0.1in
\end{figure}

\pagebreak
\textbf{Real-world experiments: single seed Label-based PageRank with no ground-truth labels}
\begin{table}[h!]
\caption{
Comparison of F1 scores across datasets for PageRank (PR) and Label-based PageRank (LPR) in the absence of supervised data.}
  \label{tab:f1_sampled_pr}
  \centering
  \setlength\tabcolsep{2.5pt}
\small
\begin{tabular}{lccccc}
  \toprule
  Dataset                                     & \begin{tabular}[c]{@{}c@{}}PR\vspace{-0.5em}\\ {\scriptsize (single-seed)}\end{tabular} & \begin{tabular}[c]{@{}c@{}}PR\vspace{-0.5em}\\ {\scriptsize (multi-seed)}\end{tabular} & LPR & Improv. ($\pm$) & Improv. (\%) \\ \hline
    Coauthor CS & 52.8 & 56.4 & \textbf{66.2} & +9.8 & +17.3 \\
    Coauthor Physics & \underline{63.4} & 61.9 & \textbf{72.1} & +8.7 & +13.6 \\
    Amazon Photo & \underline{64.2} & 63.8 & \textbf{67.6} & +3.4 & +5.3 \\
    Amazon Computers & 57.7 & \underline{60.4} & \textbf{63.2} & +2.8 & +4.7 \\
    Cora & 55.7 & \underline{58.5} & \textbf{60.6} & +2.1 & +3.5 \\
    Pubmed & \underline{56.1} & 54.9 & \textbf{58.8} & +2.7 & +4.8 \\ \hline
    \rule{0pt}{2.5ex}AVERAGE & 58.3 & 59.3 & 64.7 &  +4.9 &  +8.2 \\
    
\bottomrule
\end{tabular}
\end{table}

\subsection{Detailed results of experiment with ground-truth data}
In this subsection, we present detailed results from the first experiment with real-world data.
We report the performance of the setting with 25 positive and 25 negative nodes. We employ a Logistic Regression model with $\ell_2$ regularization for binary classification.
During inference, labels form a weighted graph as described in \ref{eq:edge_weight}, with $\epsilon=0.05$, applied only over existing edges.
In flow diffusion, the source mass of each seed node is assigned to be twice the volume of the target cluster. For PageRank, the starting scores of the source nodes are proportional to their degrees.
The $\ell_1$-regularization parameter for PageRank is set to be the inverse of the total mass dispersed in flow diffusion.
Additionally, we execute a line search process to determine the optimal teleportation parameter for PageRank.
After finishing each diffusion process, a sweep-cut procedure is conducted on the resulting embeddings using the unweighted graph.

\begin{table}[ht!]
\caption{F1 scores for the Coauthor CS dataset with 25 positive and 25 negative nodes.}
  \centering
  \small
  \renewcommand{\arraystretch}{1.2}
\begin{tabular}{llcccccc}
\toprule
   & Cluster             & CLF  & FD   & WFD   & LFD  & PR   & LPR  \\ \hline
   1  & Bioinformatics      & 85.5 & 45.8 & 55.6 & 61.9 & 51.6 & 65.5 \\
2  & Machine Learning    & 26.8 & 50.2 & 49.6 & 63.9 & 54.1 & 61.3 \\
3  & Computer Vision     & 79.4 & 64.8 & 38.5 & 82.4 & 60.9 & 71.2 \\
4  & NLP                 & 16.7 & 58.2 & 73.5 & 73.0 & 68.6 & 76.6 \\
5  & Graphics            & 52.8 & 76.8 & 75.5 & 85.7 & 74.1 & 79.2 \\
6  & Networks            & 79.5 & 67.7 & 64.4 & 80.7 & 64.2 & 73.3 \\
7  & Security            & 38.3 & 49.4 & 58.5 & 62.5 & 57.3 & 62.7 \\
8  & Databases           & 39.6 & 73.0 & 72.0 & 81.0 & 75.2 & 75.1 \\
9  & Data mining         & 49.4 & 43.4 & 42.6 & 63.0 & 46.0 & 55.1 \\
10 & Game Theory         & 7.6  & 92.0 & 92.3 & 91.9 & 91.2 & 90.9 \\
11 & HCI                 & 43.0 & 89.1 & 86.5 & 91.6 & 87.9 & 88.1 \\
12 & Information Theory  & 79.3 & 77.2 & 35.1 & 83.6 & 73.0 & 76.0 \\
13 & Medical Informatics & 26.9 & 86.6 & 85.0 & 89.2 & 85.3 & 85.6 \\
14 & Robotics            & 91.4 & 86.7 & 55.8 & 93.4 & 79.2 & 87.5 \\
15 & Theoretical CS      & 57.0 & 84.2 & 75.5 & 89.6 & 84.3 & 84.1 \\ \hline
   & AVERAGE             & 51.5 & 69.7 & 64.0 & 79.6 & 70.2 & 75.5 \\
\bottomrule
\end{tabular}
\end{table}

\begin{table}[ht!]
\caption{F1 scores for the Coauthor Physics dataset with 25 positive and 25 negative nodes.}
  \centering
  \small
  \renewcommand{\arraystretch}{1.2}
 \begin{tabular}{llcccccc}
\toprule
& Cluster
& CLF  & FD   & WFD  & LFD  & PR   & LPR  \\ \hline
1 & 
{\scriptsize
\begin{tabular}[c]{@{}l@{}}Particles, fields, gravitation, and \\ cosmology\end{tabular}}
& 87.5 & 87.2 & 74.2 & 91.9 & 87.7 & 89.6 \\
2 &
{\scriptsize
\begin{tabular}[c]{@{}l@{}}Atomic, molecular, and optical\\ physics and quantum information\end{tabular}}
& 69.2 & 69.5 & 63.7 & 82.7 & 73.8 & 77.9 \\
3 & 
{\scriptsize
\begin{tabular}[c]{@{}l@{}}Condensed matter and materials\\ physics\end{tabular}}
& 90.5 & 81.1 & 88.1 & 95.  & 89.9 & 93.3 \\
4 & {\scriptsize Nuclear physics}
& 55.2 & 81.6 & 79.4 & 87.3 & 82.6 & 84.6 \\
5 & 
{\scriptsize \begin{tabular}[c]{@{}l@{}}Statistical, nonlinear, biological,\\  and soft matter physics\end{tabular}}
& 60.4 & 46.9 & 48.3 & 74.  & 58.8 & 72.6 \\ 
\hline
& AVERAGE 
& 72.6 & 73.2 & 70.7 & 86.2 & 78.6 & 83.6\\
\bottomrule
\end{tabular} 
\end{table}

\begin{table}[ht!]
\caption{F1 scores for the Amazon Photo dataset with 25 positive and 25 negative nodes.}
  \centering
  \renewcommand{\arraystretch}{1.2}
  \small
\begin{tabular}{llcccccc}
\toprule
  & Cluster             & CLF  & FD   & WFD  & LFD  & PR   & LPR  \\ \hline
1 & Film Photography    & 37.2 & 89.9 & 82.6 & 90.0  & 87.8 & 89.1 \\
2 & Digital Cameras     & 69.5 & 81.6 & 76.9 & 82.0  & 79.5 & 79.8 \\
3 & Binoculars          & 59.2 & 97.4 & 96.7 & 96.9 & 96.6 & 97.1 \\
4 & Lenses              & 62.7 & 64.9 & 66.2 & 73.0  & 64.9 & 70.4 \\
5 & Tripods \& Monopods & 65.7 & 82.8 & 83.1 & 90.4 & 78.2 & 84.1 \\
6 & Video Surveillance  & 71.4 & 98.3 & 98.1 & 98.7 & 98.3 & 98.3 \\
7 & Lighting \& Studio  & 66.4 & 47.3 & 62.6 & 46.6 & 80.4 & 82.9 \\
8 & Flashes             & 30.9 & 56.5 & 60.2 & 67.8 & 48.2 & 55.1 \\ \hline
  & AVERAGE             & 57.9 & 77.3 & 78.3 & 80.7 & 79.2 & 82.1 \\
  \bottomrule
\end{tabular}
\end{table}

\begin{table}[ht!]
\caption{F1 scores for the Amazon Computers dataset with 25 positive and 25 negative nodes.}
  \centering
  \renewcommand{\arraystretch}{1.2}
  \small
\begin{tabular}{llcccccc}
\toprule
   & Cluster             & CLF  & FD   & WFD  & LFD  & PR   & LPR  \\ \hline
   1  & Desktops            & 23.5 & 60.5 & 70.8 & 68.6 & 72.2 & 80.3 \\
2  & Data Storage        & 52.2 & 39.0 & 41.1 & 44.2 & 54.7 & 58.7 \\
3  & Laptops             & 62.3 & 93.1 & 87.6 & 91.9 & 89.1 & 88.6 \\
4  & Monitors            & 36.3 & 61.1 & 64.5 & 81.1 & 59.7 & 74.0 \\
5  & Computer Components & 72.7 & 79.9 & 75.2 & 79.7 & 76.0 & 79.2 \\
6  & Video Projectors    & 45.3 & 95.2 & 95.2 & 95.0 & 94.5 & 94.3 \\
7  & Routers             & 27.9 & 59.3 & 53.4 & 60.9 & 58.0 & 59.4 \\
8  & Tablets             & 43.4 & 89.8 & 85.7 & 89.1 & 87.9 & 86.6 \\
9  & Networking Products & 57.2 & 64.3 & 55.5 & 70.1 & 61.6 & 65.4 \\
10 & Webcams             & 25.8 & 89.6 & 83.4 & 89.7 & 86.7 & 86.8 \\ \hline
   & AVERAGE             & 44.7 & 73.2 & 71.2 & 77.0 & 74.1 & 77.3 \\
\bottomrule
\end{tabular}
\vspace{.5in}
\end{table}

\begin{table}[ht!]
\caption{F1 scores for the Cora dataset with 25 positive and 25 negative nodes.}
  \centering
  \renewcommand{\arraystretch}{1.2}
  \small
\begin{tabular}{llcccccc}
\toprule
  & Cluster                & CLF  & FD   & WFD  & LFD  & PR   & LPR  \\ \hline
  1 & Case Based             & 44.1 & 70.3 & 70.7 & 70.4 & 71.0 & 69.2 \\
2 & Genetic Algorithms     & 60.0 & 91.8 & 91.8 & 92.6 & 90.5 & 90.5 \\
3 & Neural Networks        & 57.1 & 69.4 & 69.6 & 68.2 & 67.9 & 67.5 \\
4 & Probabilistic Methods  & 48.6 & 66.0 & 66.5 & 68.8 & 72.7 & 72.5 \\
5 & Reinforcement Learning & 44.5 & 77.4 & 77.2 & 77.2 & 76.4 & 75.0 \\
6 & Rule Learning          & 32.4 & 71.5 & 71.5 & 71.1 & 69.8 & 69.4 \\
7 & Theory                 & 41.5 & 60.8 & 60.5 & 60.0 & 60.3 & 60.4 \\ \hline
  & AVERAGE                & 46.9 & 72.5 & 72.5 & 72.6 & 72.7 & 72.1 \\
\bottomrule
\end{tabular}
\end{table}

\begin{table}[ht!]
    \caption{F1 scores for the Pubmed dataset with 25 positive and 25 negative nodes.}
      \centering
      \renewcommand{\arraystretch}{1.2}
      \small
    \begin{tabular}{llcccccc}
    \toprule
    & Cluster & CLF  & FD & WFD  & LFD  & PR   & LPR  \\ \hline
    \begin{tabular}[c]{@{}l@{}}1\\ $ $\end{tabular} & \begin{tabular}[c]{@{}l@{}}Diabetes Mellitus\\ (Experimental)\end{tabular} & 75.9 & 49.3 & 49.4 & 61.7 & 53.2 & 58.9 \\
    \begin{tabular}[c]{@{}l@{}}2\\ $ $\end{tabular} & \begin{tabular}[c]{@{}l@{}}Diabetes Mellitus\\ Type 1\end{tabular}         & 70.1 & 77.8 & 77.9 & 77.2 & 74.5 & 75.5 \\
    \begin{tabular}[c]{@{}l@{}}3\\ $ $\end{tabular} & \begin{tabular}[c]{@{}l@{}}Diabetes Mellitus\\ Type 2\end{tabular}         & 65.2 & 66.1 & 66.1 & 66.8 & 64.1 & 65.7 \\ \hline
    & AVERAGE & 70.4 & 64.4 & 64.5 & 68.6 & 63.9 & 66.7 \\
    \bottomrule
    \end{tabular}
    \vspace{4in}
\end{table}

\clearpage

\subsection{Detailed results of experiment with sampling heuristic}

This subsection outlines the second experiment conducted with real-world data. In this experiment, a single seed node is provided without any access to ground-truth data or a pre-trained classifier. As outlined in section \ref{sec:real_world_exp}, our adopted heuristic approach begins with executing an initial flow diffusion process from the provided seed node. In all reported single-seed diffusion processes, we increase the amount of mass used from twice to ten times the volume of the target cluster. The 100 nodes with the highest and lowest flow diffusion embeddings are designated as positive and negative nodes, respectively. This data is used to train a classifier, as described in the previous experimental setting, and diffusion is then run from the positive nodes, followed by a sweep-cut procedure. The following tables report the results for each dataset, broken-down by their clusters.

\begin{table}[ht!]
\caption{F1 scores for the Coauthor CS dataset in the absence of ground-truth data}
  \centering
  \small
  \renewcommand{\arraystretch}{1.2}
  \setlength\tabcolsep{2.5pt}
\begin{tabular}{ll|ccccc|ccc}
\toprule
   & Cluster             &
   \begin{tabular}[c]{@{}c@{}}FD\vspace{-0.5em}\\ {\tiny (single-seed)}\end{tabular} & \begin{tabular}[c]{@{}c@{}}WFD\vspace{-0.5em}\\ {\tiny (single-seed)}\end{tabular} & \begin{tabular}[c]{@{}c@{}}FD\vspace{-0.5em}\\{\tiny (multi-seed)}\end{tabular} & \begin{tabular}[c]{@{}c@{}}WFD\vspace{-0.5em}\\{\tiny (multi-seed)}\end{tabular}& LFD & \begin{tabular}[c]{@{}c@{}}PR\vspace{-0.5em}\\ {\tiny (single-seed)}\end{tabular} & \begin{tabular}[c]{@{}c@{}}PR\vspace{-0.5em}\\ {\tiny (multi-seed)}\end{tabular} & LPR \\ \hline
   1 & Bioinformatics & 34.1 & 35.5 & 23.7 & 28.3 & 34.1 & 31.6 & 26.5 & 45.2 \\
2 & Machine Learning & 29.0 & 22.5 & 21.5 & 26.9 & 25.6 & 30.8 & 28.9 & 44.3 \\
3 & Computer Vision & 34.5 & 18.5 & 39.2 & 28.0 & 63.5 & 48.0 & 49.2 & 59.0 \\
4 & NLP & 53.5 & 58.1 & 37.8 & 47.8 & 54.5 & 46.3 & 43.0 & 66.5 \\
5 & Graphics & 28.6 & 43.2 & 60.5 & 61.5 & 72.0 & 58.2 & 63.1 & 69.8 \\
6 & Networks & 42.1 & 32.5 & 46.3 & 48.2 & 71.6 & 50.7 & 54.5 & 62.3 \\
7 & Security & 31.9 & 34.0 & 27.8 & 31.0 & 34.3 & 30.7 & 31.1 & 53.4 \\
8 & Databases & 27.5 & 23.1 & 56.8 & 61.0 & 73.5 & 60.0 & 67.3 & 74.3 \\
9 & Data mining & 26.6 & 17.4 & 12.9 & 19.5 & 21.4 & 30.0 & 28.1 & 40.7 \\
10 & Game Theory & 83.1 & 82.0 & 83.0 & 81.0 & 83.0 & 58.8 & 84.4 & 85.6 \\
11 & HCI & 68.7 & 83.6 & 79.9 & 81.0 & 87.6 & 75.6 & 82.0 & 86.4 \\
12 & Information Theory & 36.3 & 12.3 & 56.2 & 26.0 & 78.0 & 61.5 & 65.9 & 70.3 \\
13 & Medical Informatics & 80.9 & 76.4 & 73.9 & 72.2 & 83.4 & 75.3 & 78.1 & 83.6 \\
14 & Robotics & 36.9 & 32.0 & 68.0 & 27.1 & 82.9 & 64.8 & 67.7 & 73.0 \\
15 & Theoretical CS & 43.9 & 28.1 & 70.6 & 67.0 & 81.3 & 69.5 & 76.6 & 78.7 \\ \hline
 & AVERAGE & 43.8 & 39.9 & 50.5 & 47.1 & 63.1 & 52.8 & 56.4 & 66.2 \\ 
   \bottomrule
\end{tabular}
\end{table}

\begin{table}[ht!]
\caption{F1 scores for the Coauthor Physics dataset in the absence of ground-truth data}
  \centering
  \small
  \renewcommand{\arraystretch}{1.2}
  \setlength\tabcolsep{2.5pt}
\begin{tabular}{ll|ccccc|ccc}
\toprule
   & Cluster             &
   \begin{tabular}[c]{@{}c@{}}FD\vspace{-0.5em}\\ {\tiny (single-seed)}\end{tabular} & \begin{tabular}[c]{@{}c@{}}WFD\vspace{-0.5em}\\ {\tiny (single-seed)}\end{tabular} & \begin{tabular}[c]{@{}c@{}}FD\vspace{-0.5em}\\{\tiny (multi-seed)}\end{tabular} & \begin{tabular}[c]{@{}c@{}}WFD\vspace{-0.5em}\\{\tiny (multi-seed)}\end{tabular}& LFD & \begin{tabular}[c]{@{}c@{}}PR\vspace{-0.5em}\\ {\tiny (single-seed)}\end{tabular} & \begin{tabular}[c]{@{}c@{}}PR\vspace{-0.5em}\\ {\tiny (multi-seed)}\end{tabular} & LPR \\ \hline
1 & 
{\scriptsize
\begin{tabular}[c]{@{}l@{}}Particles, fields, gravitation, and \\ cosmology\end{tabular}}
& 78.5 & 60.8 & 65.5 & 48.8 & 80.9 & 72.9 & 72.5 & 80.7 \\
2 &
{\scriptsize
\begin{tabular}[c]{@{}l@{}}Atomic, molecular, and optical\\ physics and quantum information\end{tabular}}
& 38.2 & 35.1 & 45.8 & 40.9 & 58.9 & 48.9 & 50.5 & 57.5 \\
3 & 
{\scriptsize
\begin{tabular}[c]{@{}l@{}}Condensed matter and materials\\ physics\end{tabular}}
& 81.5 & 84.0 & 81.8 & 81.6 & 87.4 & 84.9 & 85.4 & 87.6 \\
4 & {\scriptsize Nuclear physics}
& 63.5 & 68.6 & 60.6 & 58.9 & 77.7 & 68.7 & 68.0 & 72.1 \\
5 & 
{\scriptsize \begin{tabular}[c]{@{}l@{}}Statistical, nonlinear, biological,\\  and soft matter physics\end{tabular}}
& 52.3 & 36.4 & 23.5 & 25.6 & 59.4 & 41.8 & 33.2 & 62.5 \\ \hline
& AVERAGE & 62.8 & 57.0 & 55.5 & 51.1 & 72.9 & 63.4 & 61.9 & 72.1 \\
\bottomrule
\end{tabular} 
\end{table}

\begin{table}[ht!]
\caption{F1 scores for the Amazon Photo dataset in the absence of ground-truth data}
  \centering
  \small
  \renewcommand{\arraystretch}{1.2}
  \setlength\tabcolsep{2.5pt}
\begin{tabular}{ll|ccccc|ccc}
\toprule
   & Cluster             &
   \begin{tabular}[c]{@{}c@{}}FD\vspace{-0.5em}\\ {\tiny (single-seed)}\end{tabular} & \begin{tabular}[c]{@{}c@{}}WFD\vspace{-0.5em}\\ {\tiny (single-seed)}\end{tabular} & \begin{tabular}[c]{@{}c@{}}FD\vspace{-0.5em}\\{\tiny (multi-seed)}\end{tabular} & \begin{tabular}[c]{@{}c@{}}WFD\vspace{-0.5em}\\{\tiny (multi-seed)}\end{tabular}& LFD & \begin{tabular}[c]{@{}c@{}}PR\vspace{-0.5em}\\ {\tiny (single-seed)}\end{tabular} & \begin{tabular}[c]{@{}c@{}}PR\vspace{-0.5em}\\ {\tiny (multi-seed)}\end{tabular} & LPR \\ \hline
1 & Film Photography
& 69.4 & 69.3 & 80.5 & 67.1 & 82.3 & 77.1 & 78.5 & 82.6 \\
2 & Digital Cameras
& 43.8 & 61.4 & 64.1 & 59.7 & 67.2 & 69.3 & 69.2 & 69.8 \\
3 & Binoculars
& 97.2 & 94.6 & 87.0 & 81.7 & 80.4 & 86.8 & 83.0 & 86.2 \\
4 & Lenses
& 33.5 & 35.2 & 36.0 & 37.7 & 45.5 & 41.3 & 42.2 & 49.7 \\
5 & Tripods \& Monopods
& 34.3 & 36.6 & 53.6 & 69.8 & 71.9 & 50.0 & 54.0 & 61.9 \\
6 & Video Surveillance
& 98.3 & 98.1 & 98.3 & 97.9 & 98.8 & 98.1 & 98.2 & 97.0 \\
7 & Lighting \& Studio
& 39.3 & 46.7 & 46.8 & 54.1 & 50.7 & 58.7 & 56.9 & 59.1 \\
8 & Flashes 
& 19.8 & 17.2 & 30.3 & 33.1 & 38.1 & 32.1 & 28.8 & 34.4 \\ \hline
& AVERAGE & 54.5 & 57.4 & 62.1 & 62.6 & 66.8 & 64.2 & 63.8 & 67.6\\
   \bottomrule
\end{tabular}
\end{table}

\begin{table}[ht!]
\caption{F1 scores for the Amazon Computers dataset in the absence of ground-truth data}
  \centering
  \small
  \renewcommand{\arraystretch}{1.2}
  \setlength\tabcolsep{2.5pt}
\begin{tabular}{ll|ccccc|ccc}
\toprule
   & Cluster             &
   \begin{tabular}[c]{@{}c@{}}FD\vspace{-0.5em}\\ {\tiny (single-seed)}\end{tabular} & \begin{tabular}[c]{@{}c@{}}WFD\vspace{-0.5em}\\ {\tiny (single-seed)}\end{tabular} & \begin{tabular}[c]{@{}c@{}}FD\vspace{-0.5em}\\{\tiny (multi-seed)}\end{tabular} & \begin{tabular}[c]{@{}c@{}}WFD\vspace{-0.5em}\\{\tiny (multi-seed)}\end{tabular}& LFD & \begin{tabular}[c]{@{}c@{}}PR\vspace{-0.5em}\\ {\tiny (single-seed)}\end{tabular} & \begin{tabular}[c]{@{}c@{}}PR\vspace{-0.5em}\\ {\tiny (multi-seed)}\end{tabular} & LPR \\ \hline
1 & Desktops & 43.1 & 47.6 & 40.7 & 41.7 & 44.3 & 41.6 & 43.1 & 44.3 \\
2 & Data Storage & 28.8 & 32.1 & 30.6 & 30.1 & 32.2 & 38.0 & 35.6 & 40.0 \\
3 & Laptops & 77.6 & 69.6 & 73.5 & 69.8 & 80.3 & 71.1 & 74.8 & 78.7 \\
4 & Monitors & 32.8 & 32.6 & 38.7 & 41.0 & 53.8 & 31.0 & 37.1 & 50.2 \\
5 & Computer Components & 54.1 & 57.3 & 73.4 & 58.7 & 73.1 & 68.1 & 68.5 & 68.8 \\
6 & Video Projectors & 95.1 & 94.2 & 95.1 & 94.9 & 94.9 & 92.6 & 94.3 & 94.4 \\
7 & Routers & 40.5 & 33.1 & 36.9 & 33.4 & 34.4 & 42.4 & 46.6 & 47.6 \\
8 & Tablets & 79.0 & 67.4 & 72.9 & 68.8 & 71.6 & 69.5 & 73.3 & 73.7 \\
9 & Networking Products & 27.4 & 29.8 & 37.9 & 31.1 & 36.7 & 46.2 & 48.2 & 50.2 \\
10 & Webcams & 83.4 & 69.1 & 82.0 & 76.0 & 82.6 & 76.7 & 82.2 & 84.0 \\ \hline
 & AVERAGE & 56.2 & 53.3 & 58.2 & 54.6 & 60.4 & 57.7 & 60.4 & 63.2 \\
   \bottomrule
\end{tabular}
\end{table}

\begin{table}[ht!]
\caption{F1 scores for the Cora dataset in the absence of ground-truth data}
  \centering
  \small
  \renewcommand{\arraystretch}{1.2}
  \setlength\tabcolsep{2.5pt}
\begin{tabular}{ll|ccccc|ccc}
\toprule
   & Cluster             &
   \begin{tabular}[c]{@{}c@{}}FD\vspace{-0.5em}\\ {\tiny (single-seed)}\end{tabular} & \begin{tabular}[c]{@{}c@{}}WFD\vspace{-0.5em}\\ {\tiny (single-seed)}\end{tabular} & \begin{tabular}[c]{@{}c@{}}FD\vspace{-0.5em}\\{\tiny (multi-seed)}\end{tabular} & \begin{tabular}[c]{@{}c@{}}WFD\vspace{-0.5em}\\{\tiny (multi-seed)}\end{tabular}& LFD & \begin{tabular}[c]{@{}c@{}}PR\vspace{-0.5em}\\ {\tiny (single-seed)}\end{tabular} & \begin{tabular}[c]{@{}c@{}}PR\vspace{-0.5em}\\ {\tiny (multi-seed)}\end{tabular} & LPR \\ \hline
   1 & Case Based & 36.0 & 36.4 & 55.1 & 55.4 & 58.4 & 49.0 & 53.4 & 60.9 \\
2 & Genetic Algorithms & 30.8 & 31.6 & 88.7 & 88.9 & 90.5 & 82.8 & 88.1 & 89.5 \\
3 & Neural Networks & 45.9 & 46.3 & 43.8 & 43.7 & 40.3 & 56.4 & 59.0 & 55.9 \\
4 & Probabilistic Methods & 33.7 & 34.0 & 37.6 & 38.0 & 38.2 & 41.9 & 39.4 & 41.6 \\
5 & Reinforcement Learning & 25.3 & 26.2 & 67.7 & 67.7 & 68.0 & 64.4 & 69.2 & 69.4 \\
6 & Rule Learning & 34.4 & 33.9 & 52.4 & 52.3 & 57.3 & 48.5 & 55.0 & 59.6 \\
7 & Theory & 27.2 & 27.2 & 42.3 & 42.2 & 42.8 & 46.7 & 45.7 & 47.4 \\ \hline
 & AVERAGE & 33.3 & 33.7 & 55.4 & 55.4 & 56.5 & 55.7 & 58.5 & 60.6 \\
\bottomrule
\end{tabular}
\end{table}

\begin{table}[ht!]
\caption{F1 scores for the Pubmed dataset in the absence of ground-truth data}
  \centering
  \small
  \renewcommand{\arraystretch}{1.2}
  \setlength\tabcolsep{2.5pt}
\begin{tabular}{ll|ccccc|ccc}
\toprule
   & Cluster             &
   \begin{tabular}[c]{@{}c@{}}FD\vspace{-0.5em}\\ {\tiny (single-seed)}\end{tabular} & \begin{tabular}[c]{@{}c@{}}WFD\vspace{-0.5em}\\ {\tiny (single-seed)}\end{tabular} & \begin{tabular}[c]{@{}c@{}}FD\vspace{-0.5em}\\{\tiny (multi-seed)}\end{tabular} & \begin{tabular}[c]{@{}c@{}}WFD\vspace{-0.5em}\\{\tiny (multi-seed)}\end{tabular}& LFD & \begin{tabular}[c]{@{}c@{}}PR\vspace{-0.5em}\\ {\tiny (single-seed)}\end{tabular} & \begin{tabular}[c]{@{}c@{}}PR\vspace{-0.5em}\\ {\tiny (multi-seed)}\end{tabular} & LPR \\ \hline
1 & 
{\scriptsize
\begin{tabular}[c]{@{}l@{}}Diabetes Mellitus\\ (Experimental)\end{tabular}}
& 34.8 & 35.0 & 37.0 & 37.0 & 43.9 & 43.9 & 42.1 & 49.8 \\
2 & 
{\scriptsize
\begin{tabular}[c]{@{}l@{}}Diabetes Mellitus\\ Type 1\end{tabular}}
& 69.7 & 69.8 & 71.6 & 71.6 & 69.7 & 70.2 & 69.5 & 71.3 \\
3 & 
{\scriptsize
\begin{tabular}[c]{@{}l@{}}Diabetes Mellitus\\ Type 2\end{tabular}}
& 54.6 & 55.0 & 53.1 & 53.1 & 52.4 & 54.1 & 53.1 & 55.2 \\ \hline
& AVERAGE
& 53.0 & 53.2 & 53.9 & 53.9 & 55.3 & 56.1 & 54.9 & 58.8 \\
\bottomrule
\end{tabular}
\end{table}

\end{document}